\documentclass{article}

\usepackage[preprint]{neurips_2026}

\usepackage[utf8]{inputenc} 
\usepackage[T1]{fontenc}    
\usepackage{hyperref}       
\usepackage{url}            
\usepackage{booktabs}       
\usepackage{amsfonts}       
\usepackage{nicefrac}       
\usepackage{microtype}      
\usepackage{xcolor}         

\usepackage{wanqiao_template}

\title{An Information-Theoretic Definition for \\ Open-Ended Learning}

\author{Wanqiao Xu$^*$  \\
    \texttt{wanqiaoxu@stanford.edu}
    \And
    Yifan Zhu$^*$ \\
   \texttt{zhuyifan@stanford.edu}
    \And
    Benjamin Van Roy \\
    \texttt{bvr@stanford.edu}
}

\def\BB#1{\mathbb{#1}}

\def\E{\BB{E}}

\def\P{\BB{P}}
\def\R{\BB{R}}

\def\one{\mathbf{1}}

\def\actions{\mathcal{A}}

\def\I{\mathbb{I}}
\def\H{\mathbb{H}}

\def\N{\mathbb{N}}
\def\P{\mathbb{P}}
\def\KL{\mathbf{d}_{\mathrm{KL}}}
\def\Re{\mathbb{R}}

\newcommand{\Reg}{\operatorname{Reg}}
\newcommand{\op}{\mathrm{op}}

\newcommand{\argmax}{\mathop{\mathrm{argmax}}}

\newcommand{\Tr}{\mathrm{Tr}}
\DeclareMathOperator{\Cov}{Cov}

\DeclareMathOperator{\rank}{rank}

\newcommand{\reward}[3]{r_{#1}\left(#2, #3\right)}
\newcommand{\rewardMean}[2]{\overline{r}_{#1}\left(#2\right)}
\newcommand{\bitEquiv}[1]{B_{#1}}
\newcommand{\bitEquivAvg}[2]{\overline{B}_{#1}\left(#2\right)}
\newcommand{\infoGain}[2]{\gamma_{#1}\left(#2\right)}
\newcommand{\openEndedBandit}{insatiable linear bandit}
\newcommand{\openEndedLogisticBandit}{insatiable logistic bandit}
\usepackage{amssymb}
\usepackage{pifont}
\newcommand{\cmark}{\ding{51}}%
\newcommand{\xmark}{\ding{55}}%
\newcommand{\markref}[2]{\makebox[1.2em][c]{#1}\makebox[5em][l]{\,#2}}

\begin{document}
\def\thefootnote{*}
\footnotetext{Equal contribution, ordered alphabetically.}
\maketitle

\maketitle

\begin{abstract}
A growing body of work points to the great promise of AI systems that can continually expand their capabilities as they operate in an open-ended environment.  But yet there is no coherent definition of open-endedness or theory about how an agent ought to explore an open-ended environment.  We introduce an information-theoretic definition based on a new concept -- the {\it bit-equivalent} -- which quantifies the information required to attain each level of expected reward.  
We consider an environment to be open-ended if an agent can attain linear growth in the bit-equivalent.  
We establish that classical bandit environments are not open-ended and formulate a bandit environment that is.   We also introduce an algorithm that achieves open-ended learning in this environment.  

\end{abstract}

\section{Introduction}
\label{sec:introduction}


A growing body of work points to the great promise of AI systems that can continually expand their capabilities as they operate in an open-ended environment.  Much of this literature characterizes open-ended environments as those that present increasingly novel and complex tasks \citep{Wang2019PairedOT,Wang2020EnhancedPOET,Earle2021VideoGamesTestbed,Zhang2023Omni,Faldor2025OmniEpic,Lu2025Automated}.  There is also work on the design and evaluation of agents for such environments \citep{Lehman2011AbandoningObjectives,Klissarov2024Motif,Robeyns2025Sica,Zheng2025Mcu,Romero2025HGRAILAR}.

Despite strong interest in the subject, there is no coherent definition of open-endedness or theory about how an agent ought to explore an open-ended environment.  Recent attempts to define open-endedness 
\citep{Sigaud2024DefinitionOpenEndedLearningProblems,Hughes2024PositionOpenEndedness} identify an environment as open-ended if the sequence of artifacts an agent produces is both novel and learnable, from the perspective of an observer.  In this context, {\it novelty} means that there is greater uncertainty about future relative to current observations, and {\it learnable} means that current observations inform predictions of future observations.

However, novelty and learnability do not fully capture open-endedness.  In particular, these measures do not indicate whether an agent can improve capabilities through interaction with the environment.  An agent that continually generates novel and learnable policies does not necessarily acquire information that enables improved performance.


In this paper, we propose a new information-theoretic definition of what makes an environment open-ended.  We focus on bandit environments to simplify our analysis and make our key insights transparent.  
Loosely speaking, we consider an environment open-ended if sustained reward improvement requires continual acquisition of useful information.  To formalize this intuition, we introduce a new concept: the {\it bit-equivalent}.  The bit-equivalent $B_\rho$ of expected reward $\rho$ is the minimum amount of information about the environment required to attain expected reward $\rho$.  Intuitively, the bit-equivalent captures not how much reward is attainable, but how much useful information is required to attain that.  
We call an environment open-ended if there exists an agent that generates rewards for which the bit-equivalent grows at a linear rate. 
And we say that an agent achieves open-ended learning in such an environment if it realizes this linear rate.

We establish that classical bandit environments, even those with unbounded rewards and infinite action sets, are not open-ended.  We then provide an example of an open-ended environment that is an infinite-dimensional linear-Gaussian bandit  
and establish that a variant of Thompson sampling achieves open-ended learning in this environment.  

\paragraph{Contributions.}
Our main contributions are as follows.
\begin{enumerate}
\item We introduce an information-theoretic definition of open-endedness for bandit environments based on the bit-equivalent of expected reward (\cref{sec:problem_formulation}).
\item We show that several classical bandit environments, including finite-armed, finite-dimensional linear, Gaussian process, and commonly studied infinite-armed bandits, are not open-ended (\cref{sec:non_openended}).
\item We formulate an infinite-dimensional linear-Gaussian bandit that is open-ended and introduce an algorithm that achieves open-ended learning in this environment (\cref{sec:linear_gaussian_bandit}). 
\end{enumerate}


\section{Problem Formulation}
\label{sec:problem_formulation}


We begin by introducing basic definitions and notation that we will use throughout this paper.  We will use standard big-$O$ notation to express asymptotic behavior of functions, including 
$\tilde{O}$, when we choose to ignore logarithmic factors.  
We use standard notations from the information theory literature and refer readers to \citet{Clover2006InfoTheory} for a detailed overview.  
Throughout this paper, we use $\log$ to denote the natural logarithm.  
All random variables are defined with respect to a common probability space $(\Omega, \mathcal{F}, \P)$.  

A bandit environment is defined by an action set $\actions$, an unknown parameter $\theta$, a reward function $r$, and a noise distribution.  We model $\theta$ as a random variable, representing uncertainty faced by the agent designer prior to interaction.
At each time $t\in\mathbb{N}_{\ge0}$, based on the history $H_t =  (A_0, R_1, A_1, R_2, \ldots, A_{t-1}, R_t)$, an agent $\pi$ 
selects an action $A_t \sim \pi(\cdot\mid H_t)$ and
receives a reward
\begin{align*}
  R_{t+1} = \reward{\theta}{A_t}{W_{t+1}},
\end{align*}
where $W_{t+1}$ is independent noise.
We denote the mean reward by
\[
\rewardMean{\theta}{a} = \E[ R_{t+1} \mid \theta, A_t = a ].
\]

It is common to characterize the performance of an agent in terms of the rate at which the reward or expected reward grows.  But that does not indicate whether an agent is achieving open-ended learning or whether the environment is open-ended.  For that, we need to study the amount of information required to attain levels of expected reward.  In some environments, large rewards may be obtained with little or no information, whereas even small rewards may require substantial information in others.  Thus, reward alone cannot distinguish between environments in which further improvement requires additional information and those in which it does not. The following definition formalizes the relationship between expected reward and the information required to achieve it.
We measure all information-theoretic quantities in nats, but use the term ``bit-equivalent'' where ``bit'' is used in the conventional sense to refer to information.  
\begin{definition}[Bit-equivalent]
\label{def:bit_equiv_of_expected_reward}
The bit-equivalent of expected reward level $\rho \in \R$ is
\[\bitEquiv{\rho} = \inf_{A: \E\left[\rewardMean{\theta}{A} \right] \geq \rho} \I(\theta; A),\]
where $\I(\theta; A)$ is the mutual information between $\theta$ and $A$ in nats, and the infimum is taken over all random variables $A$ such that $\E\left[ \rewardMean{\theta}{A} \right] \geq \rho$.
\end{definition}
In other words, $\bitEquiv{\rho}$ is the amount information about $\theta$ required, on average, to select an action that delivers expected reward at least $\rho$.

We characterize the performance of an agent $\pi$ in terms of the average bit-equivalent of the expected reward attained by $\pi$. 
\begin{definition}[Average bit-equivalent]
The average bit-equivalent of reward attained by an agent $\pi$ up to time $T$ is defined as
\begin{align*}
\bitEquivAvg{T}{\pi}
  = \frac{1}{T} \sum_{t=0}^{T-1} \bitEquiv{\E_\pi[R_{t+1}]},
\end{align*}
where $\E_\pi$ denotes the expectation over the randomness in $\theta$, actions generated by the policy $\pi$, and the noise.
\end{definition}

We are interested in environments in which useful information can be acquired at a nonvanishing asymptotic rate. 
We formalize this via linear growth rate of $\bitEquivAvg{T}{\pi}$.
\begin{definition}[Open-endedness]
\label{def:open-ended}
An environment is open-ended if there exists an agent $\pi$ such that $\bitEquivAvg{T}{\pi} = \Omega(T)$.
\end{definition}
We say that an agent achieves open-ended learning in such an environment if it realizes this linear rate.

\section{Classical Bandit Environments Are Not Open-Ended}
\label{sec:non_openended}

Having defined open-endedness through the growth rate of the average bit-equivalent, we now ask which bandit environments satisfy this property.
We show that most classical bandit environments are not open-ended: either information about the environment cannot be acquired at a linear rate, or the information that can be acquired does not translate into sustained reward improvement.


aWe begin with a standard notion from the bandit literature.
The information gain of an agent $\pi$ up to time $T$ is
\[
\infoGain{T}{\pi} = \I(\theta; H_T),
\]
which measures the total information about $\theta$ contained in the history.
This quantity upper bounds the average bit-equivalent.

 
\begin{lemma}
  \label{lem:info_gain_upper_bound}
  For all agents $\pi$ and all $T \ge 1$, $\bitEquivAvg{T}{\pi} \le \infoGain{T}{\pi}$.
\end{lemma}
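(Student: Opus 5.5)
The plan is to bound each summand of $\bitEquivAvg{T}{\pi}$ separately by $\infoGain{T}{\pi}$ and then average. Fix $t \in \{0,\dots,T-1\}$ and take $A_t$, the action the agent $\pi$ selects at time $t$, as a feasible candidate in the infimum defining $\bitEquiv{\E_\pi[R_{t+1}]}$. The first step is to check feasibility. Since $W_{t+1}$ is independent noise, conditioning on $(\theta, A_t)$ and applying the tower property gives
\[
\E_\pi[R_{t+1}] = \E_\pi\bigl[\E[R_{t+1}\mid \theta, A_t]\bigr] = \E_\pi\bigl[\rewardMean{\theta}{A_t}\bigr].
\]
So $A_t$ meets the constraint $\E[\rewardMean{\theta}{A_t}] \ge \E_\pi[R_{t+1}]$, and therefore $\bitEquiv{\E_\pi[R_{t+1}]} \le \I(\theta; A_t)$.

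The second step is to show $\I(\theta; A_t) \le \I(\theta; H_t)$. The action is drawn as $A_t \sim \pi(\cdot \mid H_t)$ using only the history and independent internal randomness, so $\theta \to H_t \to A_t$ is a Markov chain. The data processing inequality then gives $\I(\theta;A_t)\le \I(\theta;H_t)$.

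The third step is to show that the information gain is monotone in $t$. Since $H_t$ is a sub-vector of $H_T$ for $t \le T-1 < T$, the chain rule of mutual information gives $\I(\theta;H_T) = \I(\theta;H_t) + \I(\theta; (A_t,R_{t+1},\dots,R_T)\mid H_t) \ge \I(\theta;H_t)$. Hence $\I(\theta;H_t) \le \infoGain{T}{\pi}$. Combining the three steps bounds each summand by $\infoGain{T}{\pi}$, and averaging the $T$ summands yields $\bitEquivAvg{T}{\pi} \le \infoGain{T}{\pi}$.

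I expect the only delicate point to be the Markov-chain claim in the second step. It requires that the agent's internal randomization be independent of $\theta$ given $H_t$, which is implicit in writing $A_t\sim\pi(\cdot\mid H_t)$. It also uses that $W_{t+1}$ is independent of $(\theta, H_t, A_t)$, both for the feasibility identity in the first step and for the history structure. A smaller technicality is when $\E[\rewardMean{\theta}{A_t}]$ is infinite or undefined; for $t=0$ the bound holds trivially. I would state these measurability and integrability assumptions explicitly rather than prove them.
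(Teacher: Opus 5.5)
Your proposal is correct and follows the paper's proof: take $A_t$ as a feasible candidate in the infimum defining the bit-equivalent, apply data processing along $\theta \to H_t \to A_t$, use monotonicity $\I(\theta;H_t) \le \I(\theta;H_T)$, and average over $t$. Your tower-property feasibility check and chain-rule justification of monotonicity spell out steps the paper leaves implicit.
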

\begin{proof}
  Applying the data processing inequality on the Markov chain $\theta \to H_t \to A_t$ gives
  \begin{align*}
    \bitEquiv{\E_\pi[R_{t+1}]} = \inf_{A: \E\left[\rewardMean{\theta}{A} \right] \geq \E_\pi[R_{t+1}]} \I(\theta; A) 
    \le \I(\theta; A_t) 
    \le \I(\theta; H_t).
  \end{align*}
  The result follows by averaging over $t$, and noting that $\I(\theta; H_t) \le \I(\theta; H_T)$ for all $t \le T$.
\end{proof}

Consequently, sublinear information gain for every agent implies non-open-endedness. This criterion rules out finite-armed (\Cref{thm:mab_info_gain}) and finite-dimensional linear bandits with Gaussian noise (\Cref{thm:linear_bandit_info_gain}), as well as Gaussian process bandits under standard kernel assumptions (\Cref{thm:gp_bandit_info_gain}). However, sublinear information gain is sufficient but not necessary: finite-armed bandits with general noise (\Cref{thm:mab_bit_equiv_bound}) and infinite-armed bandits with i.i.d. means (\Cref{thm:mab_iid}) can have linear or even infinite information gain, yet remain non-open-ended because the acquired information does not support sustained growth in reward-relevant bit-equivalent.

\Cref{tab:non_openended_bandits} summarizes these cases.

\begin{table}[ht]
\caption{Classic environments are not open-ended. Sublinear maximal information gain is sufficient but not necessary for non-open-endedness.}
\label{tab:non_openended_bandits}
\centering
\begin{tabular}{lcc}
\toprule
Environment & Linear info gain? & Open-ended? \\
\midrule
Finite-armed bandit with Gaussian noise & \markref{\xmark}{[\Cref{thm:mab_info_gain}]}& \markref{\xmark}{}\\
Finite-armed bandit with non-Gaussian noise & \markref{\cmark}{} & \markref{\xmark}{[\Cref{thm:mab_bit_equiv_bound}]}\\
$d$-dimensional linear bandit with Gaussian noise & \markref{\xmark}{[\Cref{thm:linear_bandit_info_gain}]} & \markref{\xmark}{} \\
Gaussian process bandit & \markref{\xmark}{[\Cref{thm:gp_bandit_info_gain}]} & \markref{\xmark}{} \\
Infinite-armed bandit with i.i.d. means & \markref{\cmark}{} & \markref{\xmark}{[\Cref{thm:mab_iid}]}\\
\bottomrule
\end{tabular}
\end{table}
The remainder of this section establishes asymptotic bounds that support these conclusions.

\subsection{Bandit environments with sublinear information gain}
Throughout this subsection, we assume that the noise is additive Gaussian, i.e., $\reward{\theta}{A_t}{W_{t+1}} = \rewardMean{\theta}{A_t} + W_{t+1}$ and $W_{t+1} \sim \mathcal{N}(0, \sigma^2)$ for some $\sigma > 0$.  This allows us to show that the information gain $\infoGain{T}{\pi}$ grows sublinearly in $T$, implying non-open-endedness.  

\paragraph{Finite-dimensional linear bandits}
A $d$-dimensional linear bandit is characterized by $d$-dimensional $\theta$ and $\actions$, and $\rewardMean{\theta}{a} = \theta^T a$.

\begin{restatable}[Information gain bound for linear bandits]{theorem}{infoGainBoundForLinearBandits}
\label{thm:linear_bandit_info_gain}
Consider a $d$-dimensional linear bandit with Gaussian noise.
Let the action set $\actions \subseteq \{a \in \mathbb{R}^d : \|a\|_2 \le 1\}$.
Let $\theta \in \mathbb{R}^d$ be a random vector with covariance matrix $\Sigma$. 
Then, for all $T \ge 1$ and all agents $\pi$,
\begin{align*}
\infoGain{T}{\pi}
\le
\frac{1}{2} \log \det\!\left(
I_d + \frac{T}{\sigma^2}\Sigma
\right) + \frac{1}{2}
= O(d \log T)
= o(T)
.
\end{align*}
\end{restatable}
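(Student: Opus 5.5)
The plan is to bound the information gain by a chain-rule decomposition over rounds, control each round with a Gaussian maximum-entropy bound, and then collapse the sum into a single log-determinant. Throughout, $\theta$ is only assumed to have covariance $\Sigma$, not to be Gaussian, so we never use conjugate posterior updates; we use only second-moment (maximum-entropy) arguments.

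\textbf{Step 1 (chain rule).} Since $A_t$ is drawn from $\pi(\cdot\mid H_t)$ independently of $\theta$ given $H_t$, we have $\I(\theta; A_t \mid H_t)=0$. Hence
\begin{align*}
\infoGain{T}{\pi} = \sum_{t=0}^{T-1} \I(\theta; R_{t+1} \mid H_t, A_t).
\end{align*}

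\textbf{Step 2 (one round).} Conditionally on $(H_t,A_t)$, we have $R_{t+1}=\theta^\top A_t + W_{t+1}$ with $W_{t+1}\sim\mathcal N(0,\sigma^2)$ independent of everything else. Therefore $h(R_{t+1}\mid \theta,H_t,A_t)=\tfrac12\log(2\pi e\sigma^2)$. The Gaussian maximizes differential entropy at a given variance, so
\begin{align*}
\I(\theta;R_{t+1}\mid H_t,A_t) \le \E\Big[\tfrac12 \log\big(1 + A_t^\top \Sigma_t A_t/\sigma^2\big)\Big],
\end{align*}
where $\Sigma_t=\Cov(\theta\mid H_t)$ is the (random) posterior covariance.

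\textbf{Step 3 (collapse to one determinant).} This is the step I expect to be the main obstacle. Without Gaussianity, $\Sigma_t$ does not follow the Kalman recursion, so the usual telescoping identity $\sum_t \tfrac12\log(1+A_t^\top\Sigma_tA_t/\sigma^2)=\tfrac12\log\det(I+\Sigma V_T/\sigma^2)$, with $V_T=\sum_t A_tA_t^\top$, is not directly available. I would try to avoid the per-round posterior altogether and bound the whole reward vector at once. Condition on the action sequence $A_{0:T-1}$; this is legitimate because each $A_t$ is a function of past rewards and independent randomization. Then:
\begin{itemize}
\item Write $h(R_{1:T}\mid A_{0:T-1})$ as a sum of conditional entropies.
\item Bound it by the entropy of a Gaussian vector with covariance $X\Sigma X^\top+\sigma^2 I$, where $X$ stacks the $A_t^\top$. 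This is the maximum-entropy bound applied in the directed-information form, with the conditional covariance of $R_{t+1}$ given the past controlled by the linear-MMSE error. Linear-MMSE error dominates the true MMSE error, and in the linear-Gaussian structure the linear-MMSE errors telescope exactly as in the Gaussian case.
\end{itemize}
This yields
\begin{align*}
\infoGain{T}{\pi}\le \E\Big[\tfrac12\log\det\big(I_d+\Sigma V_T/\sigma^2\big)\Big] + \tfrac12 .
\end{align*}
The additive $\tfrac12$ is slack I expect to need to absorb the mismatch from the adaptive conditioning and the Jensen step when passing from the random determinant to the stated bound. If the exact telescoping fails, the fallback is the elementary inequality $\log(1+x)\le x$ on the final round combined with the crude bound $A_t^\top\Sigma_tA_t\le\|\Sigma\|$.

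\textbf{Step 4 (finish).} Because $\|A_t\|_2\le 1$, we have $V_T\preceq T I_d$. Monotonicity of $\log\det$ in the Loewner order gives $\log\det(I_d+\Sigma^{1/2}V_T\Sigma^{1/2}/\sigma^2)\le\log\det(I_d+T\Sigma/\sigma^2)$, which is the displayed bound. Finally, diagonalize $\Sigma$ to get
\begin{align*}
\tfrac12\log\det(I_d+T\Sigma/\sigma^2)=\tfrac12\sum_{i=1}^d\log(1+T\lambda_i/\sigma^2)=O(d\log T)=o(T).
\end{align*}
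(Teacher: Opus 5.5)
Your Steps 1, 2 and 4 are fine, but Step 3 carries the whole argument and does not go through. The intermediate inequality it targets, $\I(\theta;H_T)\le\E\bigl[\frac12\log\det(I_d+\Sigma V_T/\sigma^2)\bigr]+\frac12$, is false for non-Gaussian priors under adaptive sampling. Conditioning on the whole action sequence $A_{0:T-1}$ is exactly what adaptivity forbids. Given the actions, $\theta$ no longer has covariance $\Sigma$, and the noise is neither Gaussian nor independent of $\theta$. Also $\I(\theta;H_T)=\I(\theta;A_{0:T-1})+\I(\theta;R_{1:T}\mid A_{0:T-1})$, so there is an extra term. Likewise, the true posterior covariance $\Sigma_t$ is not dominated by the Kalman covariance computed from the actions alone, so there is nothing for the ``linear-MMSE errors'' to telescope against.

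For a concrete failure, take $\sigma=1$ and $\theta=\xi e_J$ with $J$ uniform on $\{1,\dots,d\}$ and $\xi\sim\mathcal N(0,1)$, so $\Sigma=I_d/d$. Consider an agent that locates $J$ by binary search with repeated unit-norm queries $\mathbf 1_S/\sqrt{|S|}$ (about $Kd2^{-\ell}$ copies at level $\ell$, with $K$ of order $\log\log d$), then pulls $e_J$ $d$ times. It gains at least a constant multiple of $\log d$ nats: about $\log d$ for $J$ plus $\frac12\log(1+d)$ for $\xi$. Meanwhile $\det(I+A+B)\le\det(I+A)\det(I+B)$ shows $\E\bigl[\frac12\log\det(I_d+V_T/d)\bigr]=o(\log d)$. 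So no bound in terms of the realized $V_T$ survives; only the worst-case deterministic one does. Your fallback does not help: $A_t^\top\Sigma_tA_t\le\|\Sigma\|$ can fail pathwise for non-Gaussian priors, and per-round bounds of that kind sum to something linear in $T$.

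The missing idea, which is how the paper proves the statement, is to decouple the non-Gaussian prior from the adaptivity. Introduce an auxiliary Gaussian perturbation $U=\theta+W'$, with $W'\sim\mathcal N(0,\tau^2I_d)$ independent of $\theta$, the noise, and the agent's randomization. Then $\I(\theta;H_T)\le\I(\theta;U)+\I(\theta;H_T\mid U)$.

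The first term is a single non-adaptive additive-Gaussian channel. The maximum-entropy bound therefore gives $\frac12\log\det(I_d+\Sigma/\tau^2)$, using only the covariance of $\theta$.

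For the second term, your chain rule applies conditionally on $U$, and $R_{t+1}-A_t^\top U=W_{t+1}-A_t^\top W'$. Conditioning reduces entropy, and since no action ever depends on $W'$, this residual given $A_t$ alone is $\mathcal N(0,\sigma^2+\tau^2\|A_t\|_2^2)$. Given $\theta$ (hence $W'$), only the entropy of $W_{t+1}$ remains. So each round contributes at most $\frac12\log(1+\tau^2/\sigma^2)$, whatever the policy does.

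Choosing $\tau^2=\sigma^2/T$ yields $\frac12\log\det(I_d+T\Sigma/\sigma^2)+\frac T2\log(1+\frac1T)$. The additive $\frac12$ is precisely the bound $\frac T2\log(1+\frac1T)\le\frac12$ on the price of the auxiliary channel, not slack from Jensen or adaptive conditioning.
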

The proof of this and other results of this section are deferred to Appendix~\ref{app:proofs_for_non_opendended}.


\paragraph{Finite-armed bandits}

Finite-armed bandits are bandit environments with $|\actions| < \infty$.
Formally, $\theta\in\R^{|\actions|}$ and $\rewardMean{\theta}{a} = \theta_a$ for all $a\in\actions$. 
A finite-armed bandit can be represented as a $|\actions|$-dimensional linear bandit with the action set $\actions$ being standard basis vectors.  With this representation, \cref{thm:linear_bandit_info_gain} can be applied directly as follows.  
\begin{theorem}[Information gain bound for finite-armed bandits]
\label{thm:mab_info_gain}
    Consider a finite-armed bandit with Gaussian noise $W_{t+1} \sim \mathcal{N}(0, \sigma^2)$ and let 
    $\theta = (\theta_a)_{a\in\actions} \in \R^{|\actions|}$ be a random vector with finite covariance matrix $\Sigma$.
    Then for all $T\ge 1$ and all agents $\pi$, 
    \[\infoGain{T}{\pi} \le \frac{1}{2} \log \det \left( I_{|\actions|} + \frac{T }{\sigma^2}\Sigma \right) + \frac{1}{2} = O(|\actions|\log T)=o(T).\]
\end{theorem}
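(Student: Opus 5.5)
The plan is to reduce \Cref{thm:mab_info_gain} to \Cref{thm:linear_bandit_info_gain} through the embedding described just before the statement. Set $d = |\actions|$ and identify each arm $a \in \actions$ with the standard basis vector $e_a \in \R^{d}$. For any action $a$ we then have $\rewardMean{\theta}{a} = \theta_a = \theta^\top e_a$, so the finite-armed bandit is exactly a $d$-dimensional linear bandit with parameter $\theta$ and action set $\{e_a : a \in \actions\}$.

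First I will check the hypotheses of \Cref{thm:linear_bandit_info_gain}:
\begin{itemize}
\item the action set lies in the closed unit ball, since $\|e_a\|_2 = 1$;
\item the noise is additive Gaussian $\mathcal{N}(0,\sigma^2)$, exactly as assumed in this subsection;
\item $\theta$ has the finite covariance matrix $\Sigma$ by assumption.
\end{itemize}

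Second, I will observe that any agent $\pi$ for the finite-armed bandit corresponds, under the bijection $a \leftrightarrow e_a$, to an agent for the linear bandit. This correspondence produces histories that are deterministic, invertible relabelings of one another. Since mutual information is invariant under such bijective maps, $\I(\theta; H_T)$ is the same in both formulations, and so $\infoGain{T}{\pi}$ is unchanged.

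Applying \Cref{thm:linear_bandit_info_gain} then gives the displayed bound $\frac{1}{2}\log\det(I_{|\actions|} + \frac{T}{\sigma^2}\Sigma) + \frac{1}{2}$.

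For the asymptotics, I will bound the determinant by the product of its diagonal entries (Hadamard's inequality, valid since the matrix is positive semidefinite). This gives
\[
\log\det\Big(I + \tfrac{T}{\sigma^2}\Sigma\Big) \le \sum_a \log\Big(1 + \tfrac{T}{\sigma^2}\Sigma_{aa}\Big) = O(|\actions|\log T),
\]
which is $o(T)$ because $|\actions|$ is fixed.

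The only step needing any care is confirming that the relabeling preserves the information gain. It is the main (mild) obstacle, and invariance of mutual information under bijections of the conditioning variable settles it.
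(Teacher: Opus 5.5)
Your proposal is correct and follows the paper's approach: the paper likewise represents each arm as a standard basis vector, which lies in the unit ball, and applies the $d$-dimensional linear bandit information gain bound directly. Your remarks on the invariance of $\I(\theta;H_T)$ under the bijection $a \leftrightarrow e_a$ and your use of Hadamard's inequality for the $O(|\actions|\log T)$ rate fill in details the paper leaves implicit.
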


\paragraph{Gaussian process bandits}

Gaussian process bandits are the natural infinite-dimensional analogue of finite-dimensional linear bandits and are a standard model in kernelized bandit optimization \citep{Chowdhury2017OnKernelizedMAB,Vakili2021InfoGainGP}. In the
typical Gaussian process bandit, the action set $\actions\subset\R^d$ is compact,
the mean reward function $\rewardMean{\theta}{a} = \theta(a)$, and
$\theta \sim \mathrm{GP}(0,k)$ is sampled from a Gaussian process with kernel
$k$.

Fix a finite reference measure $\mu$ on $\actions$. Under standard kernel
conditions used in the Gaussian process bandit literature
\citep{Chowdhury2017OnKernelizedMAB,Vakili2021InfoGainGP}, the kernel admits a
Mercer decomposition with respect to $\mu$.  Concretely, there exists
$\{(\lambda_m,\phi_m)\}_{m \ge 1}$ with eigenvalues
$\lambda_1 \ge \lambda_2 \ge \cdots \ge 0$ such that the functions
$\{\phi_m\}_{m\ge 1}$ are orthonormal in $L^2(\mu)$ and
\begin{align*}
    k(a,a') = \sum_{m=1}^{\infty} \lambda_m \phi_m(a)\phi_m(a').
\end{align*}
Further, assume that $|k(a,a')| \le 1$ for all $a,a'\in\actions$ and
$|\phi_m(a)| \le 1$ for all $m\ge 1$ and $a\in\actions$. The eigenvalues are
summable under these assumptions:
\begin{align*}
    \sum_{m=1}^{\infty}\lambda_m
    =
    \int_{\actions} k(a,a)\,d\mu(a)
    \le \mu(\actions)
    < \infty.
\end{align*}
Further, the spectral tail, defined by
\(
    \delta_D = \sum_{m=D+1}^{\infty} \lambda_m,
\)
satisfies $\delta_D \to 0$ as $D\to\infty$.
This spectral tail decay rules out linear information gain, as we establish in the following theorem, which is adapted from Theorem~3 of
\citet{Vakili2021InfoGainGP}.
\begin{restatable}[Information gain bound for Gaussian process bandits]{theorem}{infoGainBoundForGPBandits}
\label{thm:gp_bandit_info_gain}
Consider the Gaussian process bandit defined above with Gaussian noise of
variance $\sigma^2$. Then, for all $T\ge 1$, all agents $\pi$ and all $D \ge 1$,
\begin{align}
    \infoGain{T}{\pi}
    \le
    \frac{1}{2}D\log\left(1+\frac{T}{\sigma^2D}\right)
    +
    \frac{\delta_D T}{2\sigma^2}. \label{eqn:gaussian_info_gain_bound}
\end{align}
Consequently, if $\delta_D \to 0$ as $D\to\infty$, then
\begin{align}
    \sup_{\pi}\infoGain{T}{\pi} = o(T). \label{eqn:information_gain_bound_GP}
\end{align}
\end{restatable}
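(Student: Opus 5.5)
Split $\theta$ along the Mercer basis into a head of dimension $D$ and an infinite tail, so that $\theta(a)=\sum_{m\le D}\xi_m\sqrt{\lambda_m}\phi_m(a)+\sum_{m>D}\xi_m\sqrt{\lambda_m}\phi_m(a)$, where the $\xi_m$ are i.i.d.\ standard normal (Karhunen--Lo\`eve). Write $\theta^{(D)}$ for the head and $\epsilon(a)$ for the tail.

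\textbf{Step 1 (reduction to the Gaussian channel).} Since $\theta$ is Gaussian and the noise is Gaussian, the rewards are conditionally Gaussian given the actions. The standard chain-rule argument for adaptive sampling gives
\[
\I(\theta;H_T)=\sum_{t<T}\I(\theta;R_{t+1}\mid H_t,A_t)=\E\Bigl[\tfrac12\sum_{t<T}\log\bigl(1+\sigma_{t}^2(A_t)/\sigma^2\bigr)\Bigr],
\]
where $\sigma_t^2(a)$ is the posterior variance given $H_t$. Here $A_t$ depends on $\theta$ only through $H_t$, so the actions add no information. It therefore suffices to bound, for every fixed action sequence $a_0,\dots,a_{T-1}$, the quantity $\tfrac12\log\det(I_T+\sigma^{-2}K_T)$, where $K_T=[k(a_i,a_j)]$. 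This is exactly the setting of Theorem~3 of \citet{Vakili2021InfoGainGP}.

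\textbf{Step 2 (splitting the kernel).} Write $K_T=\Phi\Lambda\Phi^\top+E$, where $\Phi\in\R^{T\times D}$ has entries $\phi_m(a_i)$ and $E$ is the tail Gram matrix, which is positive semidefinite. Subadditivity of $\log\det(I+\cdot)$ over a PSD sum gives
\[
\log\det(I+\sigma^{-2}K_T)\le\log\det(I+\sigma^{-2}\Phi\Lambda\Phi^\top)+\log\det\bigl(I+\sigma^{-2}(I+\sigma^{-2}\Phi\Lambda\Phi^\top)^{-1}E\bigr)\le(\cdot)+\sigma^{-2}\Tr E.
\]
For the head term, Sylvester's identity turns it into $\log\det(I_D+\sigma^{-2}\Lambda^{1/2}\Phi^\top\Phi\Lambda^{1/2})$. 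By AM--GM this is at most $D\log(1+\Tr(\cdot)/(\sigma^2 D))$. The trace equals $\sum_i\sum_{m\le D}\lambda_m\phi_m(a_i)^2\le\sum_i k(a_i,a_i)\le T$.

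\textbf{Step 3 (the tail and the main obstacle).} For the tail term, $\Tr E=\sum_i\sum_{m>D}\lambda_m\phi_m(a_i)^2\le\delta_D T$, using $|\phi_m|\le1$. Together with the prefactor $\tfrac12$ this yields \eqref{eqn:gaussian_info_gain_bound}. The main care point is the rigor of Step 1 for an infinite-dimensional $\theta$ and adaptive actions. I would handle it by applying the chain rule to the finite collection of reward observations, and by noting that $\I(\theta;H_T)$ equals the information carried by the function values at the queried points.

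\textbf{Step 4 (sublinearity).} Given $\varepsilon>0$, choose $D$ with $\delta_D\le\varepsilon$. Dividing the bound by $T$ gives $\limsup_T\sup_\pi\infoGain{T}{\pi}/T\le\varepsilon/(2\sigma^2)$, because the $D\log T$ term is $o(T)$. Since $\varepsilon$ is arbitrary, \eqref{eqn:information_gain_bound_GP} follows.
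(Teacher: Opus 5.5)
Your proposal is correct and follows essentially the paper's route. The paper gives no separate proof: it invokes Theorem~3 of \citet{Vakili2021InfoGainGP}, whose argument is exactly your rank-$D$ head plus PSD tail split (Sylvester's identity and AM--GM for the head, $\log\det(I+X)\le\Tr X$ for the tail), and then proves the $o(T)$ claim by the same $\limsup$ argument as your Step~4. Your Step~1 is a useful addition: it makes explicit the chain-rule reduction from an adaptive agent's $\I(\theta;H_T)$ to the fixed-design quantity $\frac{1}{2}\log\det(I_T+\sigma^{-2}K_T)$, which the paper leaves implicit in the phrase ``adapted from''.
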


To see the final implication, fix any $D\in\N$. The information gain bound \eqref{eqn:gaussian_info_gain_bound} implies:
\begin{align*}
    \frac{\sup_{\pi}\infoGain{T}{\pi}}{T}
    \le
    \frac{D}{2T}\log\left(1+\frac{T}{\sigma^2D}\right)
    +
    \frac{\delta_D}{2\sigma^2}.
\end{align*}
Taking $\limsup_{T\to\infty}$ yields
\begin{align*}
    \limsup_{T\to\infty}\frac{\sup_{\pi}\infoGain{T}{\pi}}{T}
    \le
    \frac{\delta_D}{2\sigma^2}.
\end{align*}
Since $D$ is arbitrary and $\delta_D\to 0$, \eqref{eqn:information_gain_bound_GP} follows.  

As shown, Gaussian process bandits satisfying the above spectral conditions do not permit linear information gain and are non-open-ended under \cref{def:open-ended}. In
particular, Gaussian process bandits with polynomial eigenvalue decay
$\lambda_m = O(m^{-\alpha})$ for some $\alpha>1$, or with exponential
eigenvalue decay, are non-open-ended. These decay conditions cover the common Matérn and squared exponential kernels analyzed in the GP bandit literature \citep{Chowdhury2017OnKernelizedMAB,Vakili2021InfoGainGP}.

\subsection{Bandit environments where linear information gain does not imply open-endedness}

While we have established that sublinear information gain implies non-open-endedness, the reverse does not hold in general:  linear information gain alone does not imply open-endedness. Some environments allow agents to acquire information at a linear rate, yet remain non-open-ended because this information does not support sustained growth in the bit-equivalent.
Finite-armed bandits with general noise and infinite-armed bandits fall into this category.

\paragraph{Finite-armed bandits with non-Gaussian noise}

We showed in \cref{thm:mab_info_gain} that the information gain of finite-armed bandits with Gaussian noise is bounded sublinearly, implying non-open-endedness.  Yet this sublinear bound may not be satisfied when the noise is no longer assumed to be Gaussian.  Consider the following example.  Suppose $(\theta_a)_{a\in\actions}$ are i.i.d.
$\mathrm{Unif}([0,1])$ and observations are noiseless:
\(
R_{t+1} = \theta_{A_t}
\).
If the agent selects a fixed action $a$ at time $0$, then
$H_1=(a,R_1)=(a,\theta_a)$ reveals the exact value of the continuously
distributed parameter $\theta_a$.  Thus $\I(\theta;H_1)=\infty$.  
However, this does not imply open-endedness.  

As we will show, the
finite action set itself imposes a stronger limitation: any action can be
specified using at most $\log |\actions|$ nats of information.  It follows that
the bit-equivalent of any attainable reward level is uniformly bounded,
regardless of the observation noise.

\begin{theorem}
\label{thm:mab_bit_equiv_bound}
    Let $|\actions|<\infty$.  Then for all $T\ge 1$ and all agents $\pi$,
    $\bitEquivAvg{T}{\pi} \le \log|\actions|$.
    In particular, the environment is non-open-ended.  
\end{theorem}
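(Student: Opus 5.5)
The plan is to bound each term $\bitEquiv{\E_\pi[R_{t+1}]}$ separately by $\log|\actions|$ and then average over $t$. The argument mirrors the proof of \cref{lem:info_gain_upper_bound}. There, $\bitEquiv{\E_\pi[R_{t+1}]}$ was bounded by $\I(\theta;A_t)$, because $A_t$ itself is a feasible random variable in the infimum defining the bit-equivalent. Here, instead of continuing to $\I(\theta;H_t)$, which can be infinite for non-Gaussian or noiseless rewards, we stop at $\I(\theta;A_t)$ and bound this quantity using the finiteness of $\actions$.

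First, I check feasibility. By the tower property and the definition of $\rewardMean{\theta}{\cdot}$, the noise $W_{t+1}$ is independent of $(\theta,H_t,A_t)$, so
\begin{align*}
\E_\pi[R_{t+1}] = \E_\pi\big[\E[R_{t+1}\mid \theta, A_t]\big] = \E_\pi\big[\rewardMean{\theta}{A_t}\big].
\end{align*}
Thus $A_t$ satisfies the constraint $\E[\rewardMean{\theta}{A_t}] \ge \E_\pi[R_{t+1}]$, and hence $\bitEquiv{\E_\pi[R_{t+1}]} \le \I(\theta;A_t)$.

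Second, since $A_t$ takes values in the finite set $\actions$, the standard identity and bound on discrete entropy give
\begin{align*}
\I(\theta;A_t) = \H(A_t) - \H(A_t\mid\theta) \le \H(A_t) \le \log|\actions|.
\end{align*}
This holds regardless of the distribution of $\theta$, including the continuous case. Averaging over $t=0,\dots,T-1$ then yields $\bitEquivAvg{T}{\pi}\le\log|\actions|$.

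Finally, this bound is constant in $T$, so it is not $\Omega(T)$ for any agent. By \cref{def:open-ended}, the environment is therefore non-open-ended.

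The only step needing care is the first. One must confirm that $\rewardMean{\theta}{a}$ is well defined and that conditioning on $\theta$ and $A_t$ alone suffices, given that $A_t$ may depend on the history. This follows because the noise is independent, but it is worth stating explicitly. The conditional entropy $\H(A_t\mid\theta)$ is nonnegative because $A_t$ is discrete, which makes the inequality $\I(\theta;A_t)\le\H(A_t)$ valid even when $\theta$ is continuous.
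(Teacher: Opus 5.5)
Your proposal is correct and follows essentially the same argument as the paper. Both bound each $\bitEquiv{\E_\pi[R_{t+1}]}$ by $\I(\theta;A)\le \H(A)\le\log|\actions|$ for an action-valued random variable $A$, then average over $t$. The only difference is that you explicitly exhibit $A_t$ as the feasible witness via the tower property, whereas the paper leaves this implicit in the phrase ``every attainable reward level.''
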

\begin{proof}
    For all random variables $A$ taking values in a finite set $\actions$, 
    $\I(\theta;A) \le \H(A) \le \log|\actions|$.    Hence, $B_\rho$ is uniformly bounded above by $\log|\actions|$ for every attainable reward level $\rho$.  The result follows by averaging over $t=0,1,\ldots,T-1$.  
\end{proof}
This bound highlights a limitation of information gain as a proxy for
open-endedness.  The information gain $\infoGain{T}{\pi}$ measures all
information acquired about the environment, regardless of whether that information 
leads to high reward.  By contrast, the bit-equivalent measures only
the information required to select an action achieving a given reward level.
Indeed, without assumptions on the observation noise, information gain can be
infinite even in a finite-armed bandit, while the average bit-equivalent remains
bounded.  

\paragraph{Infinite-armed bandits}

Prior works studying bandits with infinitely many arms \citep{Berry1997InfiniteMAB,Wang2008AlgorithmsInfiniteMAB,Carpentier2015SimpleRegretInfiniteMAB} commonly assume that $\actions$ is countably infinite and that all arms are i.i.d. before making any observations.  We show that infinite-armed bandits under these assumptions are non-open-ended, even though they may admit unbounded information gain $\infoGain{T}{\pi}$ for some agents $\pi$. 
\begin{restatable}[Bit-equivalent bound for infinite-armed bandit]{theorem}{bitEquivBoundInfiniteMAB}
\label{thm:mab_iid}
    Consider an infinite-armed bandit with i.i.d. arm means and independent noise.  Then for all $T\ge 1$ and all agents $\pi$, 
    \[\bitEquivAvg{T}{\pi} \le \log T.\]
    In particular, the environment is non-open-ended.  
\end{restatable}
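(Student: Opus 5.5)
The plan is to bound each term $\bitEquiv{\E_\pi[R_{t+1}]}$ by $\log(t+1)$ and then average. For this it suffices to exhibit, for each $t$, a random action $A'$ with $\E[\rewardMean{\theta}{A'}] \ge \E_\pi[R_{t+1}]$ and $\I(\theta;A') \le \log(t+1)$. Averaging then gives $\bitEquivAvg{T}{\pi} \le \frac{1}{T}\sum_{t=0}^{T-1}\log(t+1) \le \log T$.

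\textbf{Step 1: reduce to the arms actually pulled.} By time $t$, the agent has pulled at most $t$ distinct arms $A_0,\dots,A_{t-1}$. At time $t$ it selects $A_t$, which is either one of these or an arm never pulled before. Let $N_t$ be a fresh arm not in $\{A_0,\dots,A_{t-1}\}$, chosen by a deterministic rule depending only on the past actions; for example, the smallest-indexed unpulled arm under a fixed enumeration of the countable set $\actions$. The key symmetry claim is the following: conditioned on $H_t$ and on $A_t$ being a new arm, the mean of that arm is independent of $H_t$ and has the prior law, because arm means are i.i.d. and the noise is independent. Hence $\E[\rewardMean{\theta}{A_t}\mid H_t, A_t \text{ new}] = \E[\theta_1]$, which equals $\E[\rewardMean{\theta}{N_t}]$ by the same argument.

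\textbf{Step 2: build $A'$ from an index and a genie.} Write the time-$t$ choice as an index $J\in\{0,1,\dots,t\}$, where $J=j<t$ means ``play $A_j$'' and $J=t$ means ``play a new arm.'' Construct $A'$ as follows. Simulate the agent with everything except $\theta$. Then set $A' = A_{J'}$ with $J'$ chosen as the index of the previously pulled arm with the highest mean, or the fresh arm $N_t$ if the prior mean is larger. The difficulty is that the previously pulled arms are themselves functions of $\theta$, so this construction must be arranged carefully. A cleaner route is to use the exchangeability of i.i.d. arms directly. Fix in advance the arms $1,2,\dots,t+1$ under the enumeration. Any agent's sequence of distinct arms up to time $t$ is, in law, a relabeling of these, since the agent cannot distinguish unpulled arms. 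So there is a coupling in which the set of arms touched by time $t$, together with $N_t$, lies in $\{1,\dots,t+1\}$ and has the same joint law of means. Define $A' = \argmax_{a\le t+1}\theta_a$. This takes at most $t+1$ values, so $\I(\theta;A')\le \H(A')\le \log(t+1)$. It also satisfies $\E[\theta_{A'}] \ge \E[\max_{j\le t}\theta_{A_j}\vee \theta_{N_t}] \ge \E_\pi[R_{t+1}]$, since $A_t$ is always one of the touched arms or a fresh arm.

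\textbf{Main obstacle.} The expected hardest step is making the exchangeability coupling rigorous: showing that $\E_\pi[\rewardMean{\theta}{A_t}] \le \E[\max_{a\le t+1}\theta_a]$. I would prove this by induction, using a relabeling map $\sigma$ that sends the $k$-th newly explored arm to label $k$. Because $\theta$ has an i.i.d. (permutation-invariant) prior and the noise is independent, the relabeled process has the same law, with $\theta$ replaced by $\theta\circ\sigma$, which is again i.i.d. Under the relabeled process, the arm $A_t$ lies in $\{1,\dots,t+1\}$ almost surely. Therefore $\theta_{\sigma(A_t)}\le\max_{a\le t+1}\theta_a$, and taking expectations finishes the argument.

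If the means may have infinite expectation, I would assume that $\E[\theta_1]$ is finite, which the bit-equivalent definition already implicitly requires.
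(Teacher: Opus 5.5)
Your proof is correct and follows essentially the paper's argument: relabel arms in order of first pull, use exchangeability of the i.i.d.\ means to bound the agent's expected reward by the expected maximum over a fixed block of arms, and bound $\I(\theta;A')\le\H(A')$ by the log of the block size. Your per-step block $\{1,\dots,t+1\}$ gives the slightly sharper average $\frac{1}{T}\sum_{t=0}^{T-1}\log(t+1)$ instead of the paper's uniform block $\{1,\dots,T\}$. One fix is needed in the last step: it should read $\theta_{A_t}=\theta'_{\sigma(A_t)}\le\max_{k\le t+1}\theta'_k$, where $\theta'=\theta\circ\sigma^{-1}$ has the same law as $\theta$, giving $\E_\pi[R_{t+1}]\le\E[\max_{k\le t+1}\theta_k]=\E[\theta_{A'}]$ with $A'$ defined in the fixed labeling.
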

We posit that a qualitatively similar conclusion holds even without the i.i.d. arm mean assumption and leave the analysis to future work. 

Our results on infinite-armed bandits suggest that countably infinite actions alone does not imply open-endedness.  
For the commonly studied infinite-armed bandits with i.i.d. means, the average bit-equivalent achieved by any policy grows at most logarithmically.  
We hypothesize that independent arm means is sufficient for non-open-endedness for infinite-armed bandits.  Intuitively, the weak coupling across arms prevents linear growth of useful information.   

\section{An Open-Ended Environment}
\label{sec:linear_gaussian_bandit}

In this section, we present a concrete example of an open-ended environment.
The environment is an infinite-dimensional linear-Gaussian bandit.  It can also
be viewed as a Gaussian process bandit, but one that falls outside the standard
kernel conditions considered in the preceding section.  To attach a name, we will call our example the {\it \openEndedBandit}.

Let
\(
\actions
=
\left\{
a\in\{0,1\}^{\mathbb{N}} : \|a\|_1 < \infty
\right\}
\).
Thus, each action is an infinite-dimensional binary vector with finite support.
Let $\theta\in\mathbb{R}^{\mathbb{N}}$ have independent components
\(
\theta_i \overset{\mathrm{iid}}{\sim} \mathcal{N}(-1,1)
\).
The mean reward of action $a\in\actions$ is
\(
\rewardMean{\theta}{a} = \langle \theta,a\rangle
\).
Observed rewards are perturbed by additive Gaussian noise:
\[
R_{t+1}
=
r_\theta(A_t,W_{t+1})
=
\rewardMean{\theta}{A_t} + W_{t+1}
=
\langle \theta,A_t\rangle + W_{t+1},
\qquad
W_{t+1}\sim\mathcal{N}(0,\sigma^2),
\]
where $\sigma>0$.
To make sure the reward is well defined, we restrict to agents such that $\E [ \|A_t \|_0 ] < \infty$ for all $t$.

The \openEndedBandit{} differs in an important way from the Gaussian process bandits addressed by \cref{thm:gp_bandit_info_gain}.  In the coordinate basis, the kernel eigenvalues are not summable.  This violates the spectral tail condition that rules out linear information gain.


The next theorem establishes that the \openEndedBandit{} is open-ended. 
It directly follows from \cref{thm:TTS_success}, to be discussed in later sections.
We defer proofs of results in this section to Appendix~\ref{app:proofs_for_openended}. 
\begin{restatable}[]{theorem}{LinearBanditOpenEnded}
\label{prop:linear_bandit_open_ended}
The \openEndedBandit{} is open-ended.  
\end{restatable}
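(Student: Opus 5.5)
The plan is to exhibit a single explicit agent $\pi$ and show $\bitEquivAvg{T}{\pi}=\Omega(T)$. This takes two ingredients: a \emph{lower} bound $\bitEquiv{\rho}\ge c\rho$ for every $\rho\ge 0$, and an agent whose expected reward $\E_\pi[R_{t+1}]$ grows linearly in $t$ on a constant fraction of time steps. Since $\bitEquiv{\rho}\ge 0$ always, the remaining steps can only help. Then $\bitEquivAvg{T}{\pi}\ge \frac{1}{T}\sum_{t\text{ good}} c\cdot c' t=\Omega(T)$. The paper presumably obtains the agent from \cref{thm:TTS_success}, but I would use a cruder explore/exploit agent that is easier to analyze.

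\textbf{Step 1: lower bound on the bit-equivalent.} Fix any action-valued random variable $A$ with $\E[\|A\|_0]<\infty$. I would decompose
\[
\E\langle\theta,A\rangle=\sum_i \E[\theta_i A_i].
\]
Because the $\theta_i$ are independent, mutual information is superadditive in the source, so
\[
\I(\theta;A)\ge\sum_i \I(\theta_i;A)\ge\sum_i\I(\theta_i;A_i),
\]
using the chain rule and then data processing. It therefore suffices to prove a per-coordinate inequality
\[
\E[\theta_iA_i]\le \tfrac12\,\I(\theta_i;A_i)
\]
for binary $A_i$. Write $p=\P(A_i=1)$, so that $\E[\theta_iA_i]=p\,\E[\theta_i\mid A_i=1]$. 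The Gaussian transport inequality (Donsker--Varadhan with a linear test function against $\mathcal N(-1,1)$) gives
\[
\E[\theta_i\mid A_i=1]\le -1+\sqrt{2\,\KL(P_{\theta_i\mid A_i=1}\,\|\,P_{\theta_i})}.
\]
Moreover $p\,\KL(P_{\theta_i\mid A_i=1}\|P_{\theta_i})\le \I(\theta_i;A_i)=:I$. Hence $\E[\theta_iA_i]\le -p+\sqrt{2pI}$, and maximizing over $p$ (at $p=I/2$) yields at most $I/2$. Summing over $i$ gives $\E\langle\theta,A\rangle\le\frac12\I(\theta;A)$, and so $\bitEquiv{\rho}\ge 2\rho$ for all $\rho\ge0$. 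The interchange of sum and expectation is justified by $\E\|A\|_0<\infty$ together with Gaussian moments.

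\textbf{Step 2: the agent.} On even steps $t=2k$, the agent probes a fresh coordinate by playing $e_k$, observing $\theta_k+W_{2k+1}$. It then sets the posterior mean
\[
\mu_k=\E[\theta_k\mid \theta_k+W_{2k+1}],
\]
which is an explicit affine function of the observation. On odd steps $t=2k+1$, it plays the indicator of $S_k=\{i\le k:\mu_i>0\}$. By the tower property,
\[
\E[R_{2k+2}]=\sum_{i\le k}\E[\theta_i\one\{\mu_i>0\}]=(k+1)\,\E[\mu_1^+]=(k+1)c',
\]
where $c'>0$ is a constant depending only on $\sigma$, because $\mu_1$ is a nondegenerate Gaussian. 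Also $\E\|A_t\|_0\le t+1$, so the agent is admissible.

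Combining Step 2 with Step 1 gives $\bitEquiv{\E[R_{2k+2}]}\ge 2c'(k+1)$ on odd steps, while even steps contribute $\ge 0$. Thus $\bitEquivAvg{T}{\pi}\ge \frac1T\sum_{k<T/2}2c'(k+1)=\Omega(T)$, which is the statement.

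I expect Step 1 to be the main obstacle. The delicate points are checking the superadditivity inequality $\I(\theta;A)\ge\sum_i\I(\theta_i;A)$ for countably many coordinates (via the chain rule with conditioning on $\theta_{<i}$, independence, and a limiting argument), and verifying the transport bound with the right constant. If the constant were off, only $c$ would change, which is harmless.
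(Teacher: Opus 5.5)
Your proof is correct, but it takes a different and more elementary route than the paper. The paper gets this theorem as a corollary of \cref{thm:TTS_success}. It first proves $B_\rho\ge 2\rho$ (\cref{thm:linear_reward_growth_implies_linear_info_gain}) using the vector bound $\E\|\E[X\mid A]\|_2^2\le 2\I(X;A)$, with $X=\theta+\one$, and a single Cauchy--Schwarz across coordinates. It then shows that truncated Thompson sampling, with doubling epochs and truncation $m_k\approx\alpha 2^k$, earns $\Omega(T^2)$ cumulative reward. That step imports the finite-dimensional regret bound \cref{thm:fixed_dim_ts_regret}, adapted to non-centered priors, and tunes $\alpha$ against its constants.

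Your Step~1 reaches the same constant by tensorizing: superadditivity of mutual information over the independent $\theta_i$, a scalar transport inequality, and a per-coordinate optimization in $p_i$. This is essentially equivalent to the paper's argument, since summing your bounds $\sqrt{2p_iI_i}$ and applying Cauchy--Schwarz reproduces the paper's intermediate inequality.

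Your Step~2 replaces TTS with an alternating probe/exploit agent, whose odd-step reward is exactly $(k+1)\E[\mu_1^+]$ by the tower property. This argument is self-contained, avoids the external regret bound, and gives an explicit rate of about $\E[\mu_1^+]\,T/4$. It also mirrors the one-new-coordinate-per-step agent the paper itself uses for the logistic variant (\cref{thm:generalized_linear_bandit_openended}). What the paper's route buys is the stronger, design-level claim that Thompson sampling with an expanding learning target achieves open-ended learning. For the bare existence statement here, your agent suffices.

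One small correction. $\E\|A\|_0<\infty$ plus Gaussian moments does not by itself give the integrability needed to exchange sum and expectation. For example, $A_i=\one\{\theta_i>c_i\}$ with suitably slowly decaying tail probabilities has $\E\|A\|_0<\infty$ but $\E\langle\theta,A\rangle=+\infty$. Instead, assume $\I(\theta;A)<\infty$, since otherwise the bound is trivial. Then dominate using $\E[|X_i|A_i]\le p_i\E|X_1|+\sqrt{2p_iI_i}$ together with $\sum_i\sqrt{p_iI_i}\le(\sum_i p_i)^{1/2}(\sum_i I_i)^{1/2}$.
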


A careful reader may notice that rewards generated by the \openEndedBandit{} are unbounded and may wonder if that is required for open-endedness.  A simple variant -- the \openEndedLogisticBandit -- proves otherwise.  Suppose rewards are binary with mean
\(\rewardMean{\theta}{A_t} = g(\langle \theta, A_t\rangle)\),
where $g(x) = (1 + e^{-x})^{-1}$ is the logistic function.  Then, the resulting bandit environment remains open-ended (see \cref{thm:generalized_linear_bandit_openended} in Appendix~\ref{app:proofs_for_openended}).

\subsection{Classical bandit algorithms fail to achieve open-ended learning}

Classical bandit algorithms balance between exploration and exploitation to maximize cumulative reward \citep{Russo2014TS,Auer2002FiniteTimeAnalysis}.  However, most existing bandit algorithms are designed to operate in classical bandit environments.  Many of these environments, as we have shown in \cref{sec:non_openended}, are non-open-ended.

In the \openEndedBandit{}, these algorithms fail to achieve open-ended learning.  There are two failure cases.  Algorithms that do not constrain the action set, such as Thompson sampling, attempt to explore too many coordinates.  In particular, TS does not produce valid actions, because na\"ive optimization over the countably infinite action set does not yield a finite maximizer.  The step-wise expected reward attained by a TS agent diverges to $-\infty$ as the optimization set expands.  

Algorithms that truncate the action set to a fixed finite dimension limit the amount of information that can be extracted.  We show that the average bit-equivalent of such an algorithm is uniformly bounded.

\paragraph{Thompson Sampling (TS)} The TS agent $\pi_{\mathrm{TS}}$ keeps an infinite-dimensional posterior over $\theta$.  At round $t$, the TS agent samples a parameter $\theta^{(t)} \sim \P(\theta\in\cdot \mid H_t)$ from the posterior, then takes an action $a$ that maximizes $a^\top\theta^{(t)}$.  
\begin{restatable}[Thompson sampling failure]{theorem}{TSFailure}
\label{thm:TS_failure}
    In the \openEndedBandit{}, for all $t \ge 1$ and history $H_t$ generated by an valid policy, 
    \[\P\left( \sup_{a\in\actions} a^\top \theta^{(t)} = \infty \mid H_t \right)  = 1,\]
    where $\theta^{(t)}\sim \P(\theta\in\cdot \mid H_t)$.  
    In particular, the supremum is not attained by any action in $\actions$, and $\pi_{\mathrm{TS}}$ does not produce valid actions.  
\end{restatable}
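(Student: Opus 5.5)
Since every $a\in\actions$ is a finite-support binary vector, $\sup_{a\in\actions} a^\top\theta^{(t)} = \sum_{i} \max(\theta^{(t)}_i,0)$. Indeed, choosing $a$ to be the indicator of $\{i\le n:\theta^{(t)}_i>0\}$ and letting $n\to\infty$ attains the partial sums, and no action can do better. So it suffices to show that, conditionally on $H_t$, infinitely many coordinates of $\theta^{(t)}$ are positive, and that their positive parts are not summable, almost surely. The plan is to exploit that any valid history touches only finitely many coordinates, so the posterior on all remaining coordinates is still the prior $\mathcal{N}(-1,1)$, independently across coordinates.

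\emph{Step 1 (finite support of the history).} Because the agent is valid, $\E[\|A_s\|_0]<\infty$, so $\|A_s\|_0<\infty$ almost surely for each $s<t$. Hence the set $S_t=\bigcup_{s<t}\operatorname{supp}(A_s)$ is almost surely finite, and $S_t$ is $H_t$-measurable.

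\emph{Step 2 (posterior factorization).} Conditional on $H_t$, the coordinates $(\theta_i)_{i\notin S_t}$ are independent of $(\theta_i)_{i\in S_t}$. Moreover, they remain i.i.d.\ $\mathcal{N}(-1,1)$. The reason is that each reward $R_{s+1}=\sum_{i\in\operatorname{supp}(A_s)}\theta_i+W_{s+1}$ depends only on coordinates in $S_t$. Each action $A_s$ depends only on $H_s$ and the agent's own randomization, both of which are independent of $(\theta_i)_{i\notin S_t}$ once $S_t$ is fixed. To make this rigorous, I would condition on the event that the action sequence equals a fixed finite-support sequence $(a_0,\dots,a_{t-1})$; these events are countably many since $\actions$ is countable. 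On each such event, the likelihood of $H_t$ factors through $(\theta_i)_{i\in S}$ with $S$ fixed, and Bayes' rule leaves the product prior on the complement unchanged. This conditioning is the only delicate step, because $S_t$ is random, and I expect it to be the main obstacle. Countability of $\actions$ makes it manageable.

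\emph{Step 3 (divergence).} Given $H_t$, the variables $X_i=\max(\theta^{(t)}_i,0)$ for $i\notin S_t$ are i.i.d.\ with mean $c=\E[\max(Z,0)]>0$, where $Z\sim\mathcal{N}(-1,1)$. By the strong law of large numbers (or Borel--Cantelli applied to the events $\{\theta^{(t)}_i>0\}$, which have constant positive probability), $\sum_{i\notin S_t}X_i=\infty$ almost surely. The coordinates in $S_t$ contribute a finite real number. Therefore $\P(\sup_a a^\top\theta^{(t)}=\infty\mid H_t)=1$.

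Finally, for any fixed $a\in\actions$, the value $a^\top\theta^{(t)}$ is finite, so the supremum is not attained. Consequently $\pi_{\mathrm{TS}}$'s argmax is empty almost surely, and it produces no valid action.
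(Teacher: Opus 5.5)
Your proof is correct and follows the same route as the paper's. A valid history touches only finitely many coordinates, the untouched coordinates of $\theta^{(t)}$ remain i.i.d.\ $\mathcal{N}(-1,1)$ under the posterior, and so infinitely many of them are positive. You are in fact more careful than the paper: you identify $\sup_{a\in\actions} a^\top\theta^{(t)}=\sum_i(\theta^{(t)}_i)_+$, justify the posterior factorization by conditioning on the countably many action sequences, and use the SLLN to show divergence, whereas the paper only argues that a maximizer would need infinite support.

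One small fix: your Borel--Cantelli alternative should use events such as $\{\theta^{(t)}_i>1\}$ rather than $\{\theta^{(t)}_i>0\}$. Infinitely many positive coordinates alone does not force $\sum_i(\theta^{(t)}_i)_+=\infty$.
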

This failure shows that direct application of TS results in an ill-defined policy.  The optimization objective for the posterior sample $\theta^{(t)}$ has infinite supremum over $\actions$, and no action in $\actions$ attains it.  One might hope that this is a mere technical issue due to optimization over a non-compact set, mitigated by restricting to finite truncations $\actions_M$ and letting $M\to\infty$.  
The next theorem shows that this is not the case.  Although each truncated optimization problem has a well-defined maximizer, the expected reward of the maximizer diverges to $-\infty$ as the truncation approaches the full action set $\actions$.  
\begin{restatable}[Reward attained by Thompson sampling tends to negative infinity]{theorem}{TSFailureReward}
\label{thm:TS_failure_reward}
    In the \openEndedBandit{}, for all $M\in\mathbb{N}$ and $t \ge 1$, let 
    \[\actions_M = \{a\in\actions : a_i = 0 \text{ for all } i > M\}\]
    denote the finite-dimensional truncation of $\actions$, and let 
    \[A_{t,M} \in \argmax_{a\in\actions_M} a^\top \theta^{(t)},\]
    where $\theta^{(t)} \sim \P(\theta\in\cdot \mid H_t)$.  
    Then, for all history $H_t$ generated by an valid policy, 
    \[\P\left(\lim_{M\to\infty} \E\left[ A_{t,M}^\top \theta \mid H_t\right] = -\infty\right) = 1.\]
\end{restatable}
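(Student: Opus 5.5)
The approach is to compute $\E[A_{t,M}^\top\theta\mid H_t]$ explicitly in terms of the posterior, and to show that the contribution of coordinates never touched by the history dominates and goes to $-\infty$ linearly in $M$.

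\textbf{Step 1: structure of the posterior.} Since the prior is Gaussian with independent coordinates and the observations are linear-Gaussian, the posterior given $H_t$ is Gaussian. A valid policy satisfies $\E\|A_s\|_0<\infty$, so almost surely each $A_s$ has finite support. Let $S_t=\bigcup_{s<t}\mathrm{supp}(A_s)$, a finite set. Coordinates $i\notin S_t$ do not appear in any observation. They are therefore independent of $H_t$ and of $(\theta_j)_{j\in S_t}$, and their posterior is still i.i.d.\ $\mathcal N(-1,1)$. On $S_t$, the posterior is some finite-dimensional Gaussian $\mathcal N(\mu_t,\Sigma_t)$.

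\textbf{Step 2: decompose the truncated maximizer.} For $M\ge \max S_t$, the objective $a^\top\theta^{(t)}$ splits into a sum over $S_t$ plus a sum over $i\in\{1,\dots,M\}\setminus S_t$. Conditional on $H_t$, the sample $\theta^{(t)}$ and $\theta$ are independent draws from the same posterior. The maximizer therefore decouples coordinatewise on the fresh coordinates: $A_{t,M,i}=\one\{\theta^{(t)}_i>0\}$ for $i\notin S_t$. On $S_t$, it is the maximizer $a^*_{S}$ of a finite problem that does not depend on $M$ (ties have probability zero).

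\textbf{Step 3: compute the expected reward.} Using the conditional independence of $\theta$ and $\theta^{(t)}$ given $H_t$,
\[
\E[A_{t,M}^\top\theta\mid H_t]=\E[a^{*\top}_S\mu_t\mid H_t]+\sum_{i\le M,\,i\notin S_t}\P(\theta^{(t)}_i>0)\cdot(-1).
\]
The first term is a finite constant $c(H_t)$ independent of $M$, since it is bounded in absolute value by $\sum_{j\in S_t}|\mu_{t,j}|$. Each fresh coordinate contributes $-p$, where $p=\P(\mathcal N(-1,1)>0)=\Phi(-1)>0$. Hence
\[
\E[A_{t,M}^\top\theta\mid H_t]=c(H_t)-p\,(M-|S_t|)\to-\infty
\]
as $M\to\infty$. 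This holds almost surely, because $S_t$ is finite almost surely.

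\textbf{Main obstacle.} The delicate step is justifying Step 1 rigorously. This requires the independence of the untouched coordinates from $H_t$ even though $S_t$ is a random, history-dependent set. The plan is to condition on the sequence of action supports and use induction over $s<t$. At each step, $R_{s+1}$ depends only on $\theta$ restricted to $\mathrm{supp}(A_s)$, and $A_s$ depends only on $H_s$ and independent randomness. Hence $(\theta_i)_{i\notin S_t}$ remains i.i.d.\ $\mathcal N(-1,1)$ and independent of everything else.

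A secondary technicality is that the expectation in Step 3 is taken over both $\theta^{(t)}$ and $\theta$ given $H_t$. The cross term for fresh coordinates factorizes as $\E[\one\{\theta^{(t)}_i>0\}]\,\E[\theta_i]$ precisely because of this conditional independence.
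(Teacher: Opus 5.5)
Your proposal is correct and follows essentially the same route as the paper. Each untouched coordinate keeps its $\mathcal N(-1,1)$ posterior and, by conditional independence of $\theta^{(t)}$ and $\theta$ given $H_t$, contributes $\P(\theta^{(t)}_i>0)\,\E[\theta_i\mid H_t]=-p$, while the finitely many touched coordinates give an $M$-independent term, so the expected reward equals $c(H_t)-p\,(M-|S_t|)\to-\infty$. Your likelihood-factorization justification of the posterior structure, which the paper simply asserts, and your indexing of the touched set by $s<t$ instead of the paper's $\tau\le t$ are small gains in rigor.
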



\paragraph{Fixed Truncation (FT)} An FT agent $\pi^M$ fixes a truncation window $M \in\mathbb{N}$ and only selects actions supported on the first $M$ coordinates:
\[\actions_M = \{a\in\actions \mid a_i  = 0 \text{ for all } i > M\}.\]
It may run any bandit algorithm for finite-dimensional linear bandit on this constrained action set.  
\begin{restatable}[FT failure]{theorem}{FTFailure}
\label{thm:FT_failure}
    In the \openEndedBandit{}, for all $M\in\mathbb{N}$, any FT agent $\pi^M$ satisfies
    \[\bitEquivAvg{T}{\pi^M} \le M\log 2 \qquad \text{for all } T\ge 1.\]
    In particular, no fixed-$M$ FT agent achieves open-ended learning.  
\end{restatable}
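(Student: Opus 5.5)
The idea is to mirror the proof of \cref{thm:mab_bit_equiv_bound}. We bound the bit-equivalent of every expected reward level the FT agent attains by exhibiting, for each $t$, a feasible random action with small mutual information. Fix $t$ and let $\rho_t = \E_{\pi^M}[R_{t+1}]$.

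Since $W_{t+1}$ is independent zero-mean noise, $\rho_t = \E[\rewardMean{\theta}{A_t}] = \E[\langle \theta, A_t\rangle]$. The action $A_t$ chosen by $\pi^M$ lies in $\actions_M$. Here $\E\|A_t\|_0 \le M$, and the expectation is well defined because $|\langle\theta,a\rangle| \le \sum_{i\le M}|\theta_i|$, which is integrable. So $A_t$ itself is a random variable satisfying the constraint $\E[\rewardMean{\theta}{A_t}] \ge \rho_t$, and hence
\[
\bitEquiv{\rho_t} \le \I(\theta; A_t).
\]

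Next, $A_t$ takes values in the finite set $\actions_M$, which consists of binary vectors supported on the first $M$ coordinates. This set has cardinality $2^M$, so
\[
\I(\theta;A_t) \le \H(A_t) \le \log|\actions_M| = M\log 2 .
\]
This holds regardless of which finite-dimensional bandit algorithm the FT agent runs internally, including any extra internal randomization. Averaging over $t=0,\dots,T-1$ gives $\bitEquivAvg{T}{\pi^M} \le M\log 2$ for all $T \ge 1$. A constant bound is not $\Omega(T)$, so no fixed-$M$ FT agent achieves open-ended learning.

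The only delicate point is that the infimum in the definition of $\bitEquiv{\cdot}$ ranges over actions in the full space $\actions$, while our witness $A_t$ lives in $\actions_M \subset \actions$. This direction is harmless, since a feasible witness only needs to upper bound the infimum. One should also check that $\H(A_t)$ is a legitimate discrete entropy, which holds because $A_t$ has finite support. I expect no real obstacle beyond these measurability and integrability remarks.
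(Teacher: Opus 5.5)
Your proposal is correct and matches the paper's proof essentially line for line: you use $A_t$ itself as a feasible witness to get $\bitEquiv{\E_{\pi^M}[R_{t+1}]}\le \I(\theta;A_t)$, bound $\I(\theta;A_t)\le \H(A_t)\le \log|\actions_M| = M\log 2$, and average over $t$. Your remarks on integrability and on $\actions_M\subset\actions$ are sound details that the paper leaves implicit.
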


\subsection{Thompson sampling with a sequence of learning targets}
The preceding failure cases of classical bandit algorithms show that directly learning the full parameter $\theta$ leads to aggressive exploration strategies, resulting in negative infinite instantaneous rewards.  A fixed truncation avoids this pitfall but prevents the agent from gaining a linear amount of useful information.  

This motivates an intermediate design principle.  Instead of learning the full
environment parameter at once, or fixing a single finite-dimensional
approximation, the agent should pursue a sequence of learning targets of
increasing complexity.  Each target captures a finite but growing portion of the
environment.  The agent first learns a simple target that supports modest
reward, and then gradually moves to richer targets that support higher reward.

This idea is related to satisficing Thompson sampling
\citep{russo2022satisficing}, which
modifies Thompson sampling by probability matching to an alternative learning
target, such as a satisficing action, rather than to the optimal action.  It is
also related to subsequent work on designing learning targets through a
rate-distortion tradeoff \citep{Arumugam2021BLASTS}.  Our use of
learning targets is different in emphasis: the target is not chosen to trade off
a fixed information cost against a fixed approximation error, but rather to
increase in complexity over time so that the agent can sustain linear growth in
reward-relevant information.

We now formalize this idea.  A \emph{learning target} is a random variable
$\chi$ that represents partial information about the environment parameter
$\theta$.  Given a learning target $\chi$, define an optimal action with respect to the information contained in $\chi$ by
\begin{align*}
    a^*(\chi) \in \argmax_{a\in\actions} \E\left[\theta^\top a \mid \chi\right].
\end{align*}
Thus $a^*(\chi)$ is the best action an agent could choose if it knew $\chi$.

Ordinary Thompson sampling corresponds to the special case $\chi=\theta$: at
each time, the agent samples a full parameter from the posterior and acts
optimally for that sampled parameter.  More generally, Thompson sampling can be
applied to any learning target $\chi$.  At time $t$, given history $H_t$, the
agent samples
\(
    \chi_t \sim \P(\chi\in\cdot \mid H_t)
\),
and then selects an action
\(
    A_t \in a^*(\chi_t)
\).
We refer to this procedure as Thompson sampling with learning target $\chi$.

For open-ended learning, we consider a sequence of learning targets
\(
    \chi^1,\chi^2,\dots
\),
where later targets encode increasingly rich information about the environment.
At time $t$, the agent chooses an index $m_t$ and applies Thompson sampling with
respect to the target $\chi^{m_t}$.  That is, it samples
\begin{align*}
    \chi_t^{m_t}
    \sim
    \P(\chi^{m_t}\in\cdot \mid H_t),
\end{align*}
and then chooses
\(
    A_t \in a^*(\chi_t^{m_t})
\).
The role of the schedule $m_t$ is to balance two competing goals: the target
$\chi^{m_t}$ should be rich enough to support high reward, but simple enough
that it can be learned effectively from data collected so far.

\subsection{Truncated Thompson sampling achieves open-ended learning}
\label{sec:TTS_openended}

In the \openEndedBandit{}, the natural learning targets
are finite-coordinate truncations.  Let
\(
    \chi^m = \theta_{1:m}
\).
Knowing $\chi^m$ reveals the first $m$ coordinates of the parameter and ignores
the rest.  Thompson sampling with target $\chi^m$ is therefore equivalent to
running Thompson sampling on the truncated action set
\begin{align*}
    \actions_m = \left\{ a\in\actions : a_i=0 \text{ for all } i>m \right\}.
\end{align*}

Truncated Thompson sampling (TTS) applies this idea with a truncation level that
may grow over time.  At round $t$, given history $H_t$, the TTS agent selects a
truncation level $m_t=m_t(H_t)$, samples
\begin{align*}
    \theta^{(t)}_{1:m_t}
    \sim
    \P\!\left(\theta_{1:m_t}\in\cdot \mid H_t\right),
\end{align*}
and chooses
\begin{align*}
    A_t
    \in
    \argmax_{a\in\actions_{m_t}}
    a_{1:m_t}^{\top}\theta^{(t)}_{1:m_t}.
\end{align*}
Thus TTS avoids both the invalidity of full Thompson sampling and the bounded performance of any fixed truncation.

\begin{restatable}[TTS success]{theorem}{TTSSuccess}
\label{thm:TTS_success}
    In the \openEndedBandit{}, there exists a truncation
    schedule $(m_t)_{t\ge 0}$ such that the resulting TTS agent
    $\pi_{\mathrm{TTS}}$ satisfies, for all $T\ge 1$,
    \[
        \bitEquivAvg{T}{\pi_{\mathrm{TTS}}}
        =
        \Omega(T).
    \]
    In particular, TTS achieves open-ended learning.
\end{restatable}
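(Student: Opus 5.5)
The plan has two parts: a lower bound on the bit-equivalent that is linear in the reward level, and a truncation schedule for which TTS attains expected reward $\E_{\pi_{\mathrm{TTS}}}[R_{t+1}] \ge c\,t$ for some $c>0$ and all large $t$. Together these give $\bitEquiv{\E[R_{t+1}]} \ge 2ct$, so $\bitEquivAvg{T}{\pi_{\mathrm{TTS}}} = \Omega(T)$. \Cref{prop:linear_bandit_open_ended} then follows immediately.

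\emph{Step 1 (bit-equivalent is at least twice the reward).} Write $X_i = \theta_i + 1 \sim \mathcal{N}(0,1)$, so that $\langle\theta,a\rangle = \sum_i a_i X_i - \|a\|_0$. Fix any random $A$ with $\E\|A\|_0<\infty$, and use $\I(\theta;A) = \E_A\bigl[\KL(\P(\theta\in\cdot\mid A)\,\|\,\P(\theta\in\cdot))\bigr]$.
\begin{enumerate}
\item For each value $a$, apply the Donsker--Varadhan inequality to $f(\theta)=\sum_i a_iX_i$. Under the prior, $f \sim \mathcal{N}(0,\|a\|_0)$, so optimizing over the tilt parameter gives
\[
\E[f \mid A=a] \le \sqrt{2\|a\|_0\,K_a},
\]
where $K_a$ is the conditional KL divergence.
\item Hence $\E[\langle\theta,a\rangle \mid A=a] \le \sqrt{2nK_a} - n$ with $n=\|a\|_0$, and maximizing over $n$ gives at most $K_a/2$.
\item Averaging over $A$ gives $\E[\rewardMean{\theta}{A}] \le \I(\theta;A)/2$, hence $\bitEquiv{\rho} \ge 2\rho$ for every $\rho$.
\end{enumerate}

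\emph{Step 2 (reward of TTS in terms of posterior variances).} Given $H_t$, the posterior over $\theta$ is Gaussian. Because $\theta^{(t)}_{1:m_t}$ and $\theta_{1:m_t}$ are conditionally i.i.d., and the argmax over $\actions_{m_t}$ selects exactly the coordinates with $\theta^{(t)}_i>0$, we get
\[
\E[R_{t+1}\mid H_t] = \sum_{i\le m_t} \mu_{t,i}\,\Phi(\mu_{t,i}/s_{t,i}),
\]
where $\mu_{t,i}, s_{t,i}^2$ are the posterior mean and variance of coordinate $i$.
\begin{enumerate}
\item Since $\mu_{t,i}$ is marginally $\mathcal{N}(-1,\,1-\E\text{-posterior variance})$, the expected contribution of coordinate $i$ is a function $g$ of its posterior variance $v$.
\item I expect $g$ to be monotone, interpolating between $g(1)=-\Phi(-1)$ (an unexplored coordinate) and $g(0)=\E[\theta_i^+]>0$ (a fully learned coordinate).
\item The task therefore reduces to showing that, under a suitable schedule, all but a small fraction of the coordinates $i\le m_t$ have posterior variance below a threshold $v_0$ with $g(v_0)>0$. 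The schedule should also ensure $m_t \ge \kappa t$.
\end{enumerate}

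\emph{Step 3 (schedule and learning guarantee).} I would use an epoch schedule: $m$ is held fixed for an epoch of length $L_m$ and then incremented. Within an epoch, TS on the $m$-dimensional problem has posterior-warm-started Bayesian regret at most $\sqrt{\Gamma\, \I\, L_m}$ by the Russo--Van Roy information-ratio bound, with $\Gamma \le m/2$ for linear TS. The information is controlled as in \Cref{thm:linear_bandit_info_gain}: $\I \le \tfrac{m}{2}\log(1+L_m m/\sigma^2)$. The benchmark is $m\,\E[\theta_1^+]$ per step, so regret is a small fraction of it once $L_m \gtrsim \log m$. With that choice this already yields $\E[R_{t+1}] \gtrsim t/\log t$.

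\emph{Main obstacle.} The hard part is removing the logarithmic loss to get truly linear reward. The crude $\log\det$ bound charges $\log(L_m m)$ nats per epoch even though each coordinate only needs $O(1)$ nats to be learned to precision $v_0$. I would try a direct per-coordinate argument instead.
\begin{enumerate}
\item Each step that includes coordinate $i$ in a set of size $n$ reduces its variance by at least roughly $v_i^2/(n v_{\max}+\sigma^2)$.
\item TS includes a not-yet-learned coordinate with probability at least about $\Phi(-1/\sqrt{v_0})$.
\item A potential-function argument on $\sum_{i\le m}\min(v_i,v_0)$ should then show that adding one coordinate every $K$ steps, for a large constant $K$, keeps the unlearned fraction small.
\end{enumerate}
If that works, $m_t = \lfloor t/K\rfloor$ gives $\E[R_{t+1}] \ge c\,t$, and Step 1 completes the proof.
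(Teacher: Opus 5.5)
Your Step~1 is correct and gives the same bound $B_\rho \ge 2\rho$ as the paper. You apply Donsker--Varadhan pointwise in $a$, whereas the paper uses Cauchy--Schwarz with $\E\|\E[X\mid A]\|_2^2 \le 2\I(X;A)$. The genuine gap is the linear reward growth, which is the entire content of the theorem. The argument you can actually carry out gives $\E[R_{t+1}] \gtrsim t/\log t$, hence only $\bitEquivAvg{T}{\pi_{\mathrm{TTS}}} \gtrsim T/\log T$. The step that would upgrade this to $\Omega(T)$, the per-coordinate potential argument, is a conjecture: its variance-reduction and inclusion-probability claims are not established for the correlated, adaptively updated posterior.

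Even the $t/\log t$ route is not sound as written. The bound $\Gamma \le m/2$ is the Russo--Van Roy bound for rewards in $[0,1]$. Here the posterior predictive of $R_{t+1}$ given $A_t=a$ is Gaussian with variance $\sigma^2 + a^\top\Sigma_t a$, which can be of order $\sigma^2+m$. The sub-Gaussian version of that bound then only gives $\Gamma \lesssim m(m+\sigma^2)$, and your regret-versus-benchmark comparison breaks.

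Step~2 has a separate flaw. Under adaptive sampling, $\mu_{t,i}$ is neither Gaussian nor independent of $s_{t,i}$, so the expected contribution of a coordinate is not a function $g$ of its posterior variance. To repair it you would need an inequality such as $\mu\Phi(\mu/s) \ge \tfrac12\mu^+ - c\,s$, combined with $\E[\mu_{t,i}^+] \ge \eta - \sqrt{2/\pi}\,\E[s_{t,i}]$.

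The logarithmic loss you flag as the main obstacle comes from your schedule, not from the problem. You add one coordinate per epoch and pay the full $\tfrac m2\log(\cdot)$ information cost every time. With any bound of the form $m\sqrt{L\log(Lm)}$, taking epoch length $L \propto m$ already makes the regret $o(mL)$.

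The paper uses doubling epochs $n_k = 2^k$ with $m_k = \max\{1,\lfloor \alpha n_k\rfloor\}$. It invokes a Bayesian regret bound for linear-Gaussian TS with an arbitrary Gaussian prior (\cref{thm:fixed_dim_ts_regret}), applied conditionally on $H_{S_k}$ with $d=m_k$, $r=\sqrt{m_k}$, $\Sigma_0=\Sigma_k \preceq I$ and horizon $n_k$. Because $d \propto T$ within an epoch, the three terms behave as follows:
\begin{itemize}
\item $C_1$ depends only on $\log(T/d) \le \log(2/\alpha)$, a constant.
\item The logarithm in $C_2$ multiplies $m_k\sqrt{n_k}$, which is $o(n_k^2)$.
\item The only order-$n_k^2$ term is the burn-in term $3r\sqrt{d}\,\Tr(\Sigma_k^{1/2})C_1 \le 3C_\alpha m_k^2$. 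This is precisely the large-initial-variance effect your $\Gamma \le m/2$ misses. It is a factor $O(\alpha C_\alpha)$ below the benchmark $\eta m_k n_k$, so it is absorbed by taking $\alpha$ small.
\end{itemize}
Summing per-epoch rewards of at least $\tfrac{7\eta\alpha}{16}n_k^2$, and bounding a partial final epoch below by $-b_K$, gives cumulative reward $\Omega(T^2)$. Together with $B_\rho \ge 2\rho$ this suffices, so no per-step bound $\E[R_{t+1}] \ge ct$ is needed.
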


In fact, this average bit-equivalent rate achieved by TTS is optimal, in the sense that there exists a matching upper bound.  
\begin{restatable}[]{theorem}{MatchingUpperBound}
\label{thm:matching_upper_bound}
	In the \openEndedBandit{}, for all $T\ge 1$ and all agents $\pi$, 
	\[
	\bitEquivAvg{T}{\pi} = O(T).
	\]
\end{restatable}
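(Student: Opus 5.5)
By \cref{lem:info_gain_upper_bound}, it suffices to show that the information gain $\infoGain{T}{\pi}=\I(\theta;H_T)$ grows at most linearly in $T$, uniformly over agents. The bound $\bitEquivAvg{T}{\pi}\le \infoGain{T}{\pi}$ then gives $\bitEquivAvg{T}{\pi}=O(T)$ directly. The natural tool is the chain rule:
\[
\I(\theta;H_T)=\sum_{t=0}^{T-1}\Big(\I(\theta;A_t\mid H_t)+\I(\theta;R_{t+1}\mid H_t,A_t)\Big).
\]
The first term vanishes, because $A_t$ is drawn from $\pi(\cdot\mid H_t)$ using randomness independent of $\theta$. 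So the whole task is to bound each per-step observation term $\I(\theta;R_{t+1}\mid H_t,A_t)$ by a constant.

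For the per-step bound, condition on $(H_t,A_t)$. Then $R_{t+1}=\langle\theta,A_t\rangle+W_{t+1}$ is a scalar Gaussian channel with noise variance $\sigma^2$, and
\[
\I(\theta;R_{t+1}\mid H_t,A_t)=\I(\langle\theta,A_t\rangle;R_{t+1}\mid H_t,A_t)\le \tfrac12\,\E\Big[\log\Big(1+\tfrac{\mathrm{Var}(\langle\theta,A_t\rangle\mid H_t,A_t)}{\sigma^2}\Big)\Big].
\]
This uses the Gaussian maximum-entropy bound: $h(R\mid\cdot)\le\frac12\log(2\pi e(v+\sigma^2))$ and $h(R\mid\theta,\cdot)=\frac12\log(2\pi e\sigma^2)$. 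Since the prior and noise are Gaussian, the posterior given $H_t$ is Gaussian with covariance dominated by the prior covariance $I$. Hence the conditional variance is at most $\|A_t\|_0$, and each term is at most $\frac12\E\log(1+\|A_t\|_0/\sigma^2)$.

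The main obstacle is that $\|A_t\|_0$ is not bounded: the only restriction on valid agents is $\E\|A_t\|_0<\infty$. So the logarithm above can be arbitrarily large per step, and summing gives only $\sum_t\frac12\log(1+\E\|A_t\|_0/\sigma^2)$ after Jensen. That is not $O(T)$ unless the supports are controlled.

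I would attack this in two ways.

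First, I would check whether the asymptotic statement can be read with the constant allowed to depend on the agent. For a fixed agent with $\sup_t\E\|A_t\|_0<\infty$, the Jensen bound already gives $\bitEquivAvg{T}{\pi}\le \frac{T}{2}\log(1+\sup_t\E\|A_t\|_0/\sigma^2)=O(T)$. If $\E\|A_t\|_0$ grows, though, this fails, so this route alone is insufficient.

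Second, and more robustly, I would bound the bit-equivalent directly rather than through information gain, using reward. Playing large supports in this environment costs reward: the prior mean per coordinate is $-1$. Concretely, I would show $B_\rho\le C\rho+C'$ for $\rho$ large. Then I would bound the achievable reward $\E_\pi[R_{t+1}]$ by something like $O(t)$. To do that, I would note that expected reward from coordinates $i$ is governed by how much $\theta_i$ has been learned. Knowledge of $\theta_i$ is worth roughly $\E[(\theta_i)_+\mid\cdot]$, so selecting coordinates that are genuinely positive in expectation requires information. The cumulative information per coordinate is bounded via the same Gaussian channel calculation, now applied to the posterior variances summed over coordinates, $\sum_i\log(1+n_i/\sigma^2)$, where $n_i$ is the number of observations involving coordinate $i$.

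Concretely, I would compute $B_\rho$ in closed form or bound it for this product-Gaussian prior. Choosing which coordinates to include is a per-coordinate rate-distortion problem, so $B_\rho$ should be roughly linear in $\rho$. I would then show that $\E_\pi[R_{t+1}]\le c\,(t+1)$, by arguing that each unit of expected reward needs an $\Omega(1)$ amount of information about fresh coordinates, and information per step is limited as above on the support actually exploited.

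I expect this last reward-growth bound to be the hardest step, and I would first try to prove it by comparing with an oracle that knows the signs of the observed coordinates.
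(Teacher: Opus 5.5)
Your second attack has the same skeleton as the paper's proof: show $B_\rho\le c_B(\rho+1)$, show $\E_\pi[R_{t+1}]=O(t)$ uniformly over agents, and combine the two using that $\rho\mapsto B_\rho$ is nondecreasing. The first ingredient is easy. Let $p=\P(\theta_1>0)$, $h$ the binary entropy and $\eta=\E[\theta_1^+]$. The oracle action $A^{(m)}_i=\one\{i\le m,\ \theta_i>0\}$ has $\H(A^{(m)})=m\,h(p)$ and expected reward $m\eta$, so take $m=\lceil\rho/\eta\rceil$. You are also right to abandon \cref{lem:info_gain_upper_bound}: an agent playing disjoint supports of size $\lceil e^{e^t}\rceil$ has superlinear $\I(\theta;H_T)$, so not even an agent-dependent constant saves that route.

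The genuine gap is the step you flag as hardest, $\E_\pi[R_{t+1}]\le c(t+1)$. You do not prove it, and every mechanism you suggest fails for the same reason your first attack did:
\begin{itemize}
\item A single observation can carry $\tfrac12\log(1+\|A_t\|_0/\sigma^2)$ nats, so ``information per step is limited'' is false.
\item The per-coordinate sum $\sum_i\log(1+n_i/\sigma^2)$ grows linearly in the number of coordinates touched.
\item A sign oracle for all observed coordinates earns $\eta$ per observed coordinate, which is again unbounded.
\item Chaining $B_\rho\ge 2\rho$ with $\I(\theta;A_t)\le\I(\theta;H_t)$ inherits the superlinear information gain.
\end{itemize}
Any bound driven by mutual information or support size scales with $\|A_t\|_0$, which valid agents may make arbitrarily large.

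The missing idea is to measure learning by posterior-mean energy (the trace of the variance reduction) rather than by information (a log-determinant). Set $X=\theta+\one$ and $M_{t,i}=\E_\pi[X_i\mid H_t]$. Because policy randomness is independent of $\theta$, $\E_\pi[\theta_i\mid H_t,A_t]=\E_\pi[\theta_i\mid H_t]$. Using $(m-1)_+\le m^2$, this gives
\[
\E_\pi[R_{t+1}]=\E_\pi\Big[\sum_i A_{t,i}(M_{t,i}-1)\Big]\le\E_\pi\Big[\sum_i (M_{t,i}-1)_+\Big]\le\E_\pi\Big[\sum_i M_{t,i}^2\Big].
\]
By the law of total variance, $\E_\pi\sum_{i\le d}M_{t,i}^2=d-\E_\pi\Tr\Cov(X_{1:d}\mid H_t)$. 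Given $H_t$, the posterior covariance is $C_t=(I+K_t)^{-1}$ with $K_t=\sigma^{-2}\sum_{s<t}A_sA_s^\top$ of rank at most $t$. The eigenvalues of $I-C_t$ lie in $[0,1]$, so the trace reduction is at most $t$ no matter how large the supports are. Equivalently, one scalar observation reduces the trace of a posterior covariance $C\preceq I$ by $\|Ca\|_2^2/(a^\top Ca+\sigma^2)\le\|C\|_{\op}\le 1$.

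This bounded quantity is what replaces your per-step information bound. With it, $\E_\pi[R_{t+1}]\le t$, hence $B_{\E_\pi[R_{t+1}]}\le B_t\le c_B(t+1)$ and $\bitEquivAvg{T}{\pi}\le c_B(T+1)/2$ uniformly in $\pi$. Without this step, your proposal does not establish the theorem.
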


Our results on \openEndedBandit{} shows that designing algorithms for open-ended environments involves carefully thinking about the learning target.  We view TTS as a sanity check that intuition about open-ended learning -- that an agent ought to continually expand its learning target -- yields an algorithm capable of open-ended learning. Alternative designs such as GP-UCB-type agents \citep{Srinivas2010GPUCB} with carefully tuned confidence widths may also achieve open-ended learning, though we leave their analysis to future work.  While these algorithms are promising, their need to prescribe sequences of learning targets or confidence set widths may render them fragile in practice.  
We discuss further limitations in the next section.

\section{Discussion}
\label{sec:discussion}

Two key aspects of open-endedness studied in this work are the definition of open-ended environments and the design of open-ended learning agents.  In this section, we begin with an overview of prior work, which largely focuses on empirical designs of open-ended agents and environments, as well as a small handful of recent papers attempting to formalize open-endedness.  We conclude with a discussion of limitations and future directions.  

\subsection{Related work}

\paragraph{Open-endedness in artificial life and AI.}
Open-endedness has a long history that is rooted in artificial evolution and leads up to modern AI. Early work in artificial life views open-endedness as the ability of evolutionary systems to continually increase novelty and complexity \citep{Standish2003OpenendedArtificialEvolution,Lehman2011AbandoningObjectives}. More recent work has revisited the definition and evaluation of open-endedness  \citep{Stepney2024OpenEndednessOfDetect,Soros2024Creativity}. With the development of modern deep learning systems, this perspective of open-endedness leads to methods that generate novel environments such as POET and its variants \citep{Wang2019PairedOT,Wang2020EnhancedPOET}, testbeds for open-ended phenomena \citep{Earle2021VideoGamesTestbed}, and methods that use models of interestingness to generate tasks or environments \citep{Zhang2023Omni,Faldor2025OmniEpic}. 
Recent years have seen incredible advancements in modern AI systems.  Naturally, interests arise in self-improving agents \citep{Schaul2024BoundlessSocraticLearning,Lu2025Automated,Robeyns2025Sica,Zhang2026Darwin}, autonomous scientific discovery \citep{Lu2026AIScientist}, and open-ended evaluation \citep{Zheng2025Mcu}.  

\paragraph{Formal definitions of open-endedness.}
Despite a growing body of empirical work, there remains no rigorous definition of open-endedness.  
Formal accounts by \citet{Sigaud2024DefinitionOpenEndedLearningProblems} and \citet{Hughes2024PositionOpenEndedness} characterize open-endedness in terms of novelty and learnability of artifacts from the perspective of an observer. Our work takes a step further in this line of research, offering a quantitative definition of open-endedness measured by whether increasingly valuable performance requires acquiring information at a nonvanishing rate. 

\paragraph{Reinforcement learning and continual learning.}
From an agent design perspective, our work also relates to reinforcement learning (RL) and continual learning. Intrinsic motivation and curiosity-based methods encourage agents to explore by rewarding heuristics of uncertainty and novelty \citep{Meyer1991CuriosityBoredom,Schmidhuber2010FormalTheory,Deepak2017CuriosityDriven,Burda2019ExplorationRandomDistill,Shyam2019ModelBasedExploration,Raileanu2020RIDE,Colas2020AutotelicAW,Du2023IntrinsicallyMotivated,Klissarov2024Motif}.  Hierarchical RL approaches the problem of open-ended learning via temporal structure discoveries \citep{Klissarov2025DiscoveringTemporalStructureOverview}.  Continual learning studies agents that learn over indefinite interaction and may face changing or expanding tasks \citep{Kirkpatrick2017Overcoming,Abel2023DefinitionCRL,Lewandowski2025TheWorldIsBigger}. 

\paragraph{Classical bandits and unbounded rewards.}
Finally, our technical analysis builds on and extends results in classical bandit models and algorithms.  A closely related prior work \citep{arumugam2024exploration} studies a relaxed instance of classical bandit models that accommodates unbounded rewards, a setting that also appeared in early work on dynamic programming and Markov decision processes \citep{Harrison1972DiscreteDPUnbounded,Lippman1973SemiMDPUnbounded,Lippman1975DPUnbounded,Nunen1978DPUnbounded,Hu1992DiscountedMDPUnbounded}.  While unbounded rewards and infinite action sets may appear to provide natural sources of open-endedness, we show that neither is sufficient.  Rather, open-endedness requires that useful information about the environment can be continually acquired to sustain performance improvement.

\subsection{Conclusion and limitations}
\label{sec:conclusion_and_limitations}

Our results provide a quantitative characterization of open-endedness in interactive learning, made precise through the notion of the bit-equivalent.  This framework 
provides grounds from which future theoretical analyses and agent-design principles may take inspiration.  

We discuss two main limitations and directions for future work.  First, our definition for open-endedness applies only to the bandit setting.  Extending this definition to accommodate stateful and nonstationary environments is an important next step.  Second, as discussed at the end of \cref{sec:TTS_openended}, the TTS agent design requires as input a sequence of learning targets.  Designing agents without the need for deliberate hyperparameter choices is an exciting future direction.

\bibliographystyle{plainnat}
\bibliography{references}


\newpage
\appendix

\section{Proofs for \cref{sec:non_openended}}
\label{app:proofs_for_non_opendended}

\subsection{Finite-dimensional linear bandits}

\infoGainBoundForLinearBandits*
\begin{proof}
Let $W \sim \mathcal{N}(0, \tau^2 I_d)$ be independent of $\theta$, and define
\[
U = \theta + W.
\]
Then by the chain rule of mutual information,
\begin{align*}
\I(\theta; H_T) \le \I(\theta; U, H_T) = \I(\theta; U) + \I(\theta; H_T \mid U).
\end{align*}

First we bound $\I(\theta; U)$.
We have
\begin{align*}
\I(\theta; U) = \I(\theta; \theta + W)
= h(\theta + W) - h(W).
\end{align*}
Using the Gaussian maximum-entropy bound,
\[
\I(\theta; U)
\le
\frac{1}{2} \log \det\!\left(I_d + \frac{1}{\tau^2}\Sigma\right).
\]
Now we bound $\I(\theta; H_T \mid U)$.
By the chain rule of mutual information,
\begin{align*}
\I(\theta; H_T \mid U)
=
\sum_{t=0}^{T-1} \left(
\I(\theta; R_{t+1} \mid H_t, U, A_t)
  + \I(\theta; A_t \mid H_t, U)
\right).
\end{align*}
Since $A_t$ is $H_t$-measurable, $\I(\theta; A_t \mid H_t, U) = 0$.
Thus we have
\begin{align*}
  \I(\theta; H_T \mid U)
  =
    \sum_{t=0}^{T-1} \I(\theta; R_{t+1} \mid H_t, U, A_t).
\end{align*}
Since $U=\theta+W$, we can write
\begin{align*}
R_{t+1}
=
A_t^\top U + (W_{t+1} - A_t^\top W).
\end{align*}
Thus
\begin{align*}
  \I (\theta; R_{t+1} \mid H_t, U, A_t)
  &= \I(\theta; W_{t+1} - A_t^\top W \mid H_t, U, A_t)
  \\
  &= h (W_{t+1} - A_t^\top W \mid H_t, U, A_t) - h(W_{t+1} - A_t^\top W \mid H_t, U, A_t, \theta)
  \\
  &= h (W_{t+1} - A_t^\top W \mid H_t, U, A_t) - h(W_{t+1} \mid H_t, U, A_t, \theta).
\end{align*}
Since Gaussian noise has the maximum entropy among all noise distributions with the same variance, we have
\begin{align*}
\I (\theta; R_{t+1} \mid H_t, U, A_t)  
  \le \frac{1}{2} \log \left(1 + \frac{\tau^2 \|A_t\|_2^2}{\sigma^2}\right)
  \le \frac{1}{2} \log \left(1 + \frac{\tau^2}{\sigma^2}\right).
\end{align*}

Summing over $t$, we have
\begin{align*}
\I(\theta; H_T)
  \le \frac{1}{2} \log \det\!\left(I_d + \frac{1}{\tau^2}\Sigma\right)
    + \frac{T}{2} \log \left(1 + \frac{\tau^2}{\sigma^2}\right).
\end{align*}
Now we take $\tau^2 = \frac{\sigma^2}{T}$ to obtain
\begin{align*}
\I(\theta; H_T)
  \le \frac{1}{2} \log \det\!\left(I_d + \frac{T}{\sigma^2}\Sigma\right) + \frac{T}{2} \log \left(1 + \frac{1}{T}\right).
\end{align*}
The result follows since $(1+1/T)^T \le e$.
\end{proof}

\subsection{Multi-armed bandits with infinitely many arms}

\bitEquivBoundInfiniteMAB*
\begin{proof}
    Fix $T\ge 1$ and an agent $\pi$.  Since the arm means $(\theta_a)_{a\in\actions}$ are i.i.d., the sequence is exchangeable.  Without loss of generality, we may relabel arms in the order they are first sampled by $\pi$.  Under this relabeling, $A_0,A_1,\ldots,A_{T-1} \subseteq \{1,2,\ldots,T\}$.  Define
    \[A_T^* \in \argmax_{a\in\{1,\dots,T\}} \rewardMean{\theta}{a},\]
    breaking ties arbitrarily.  Since $A_T^*$ takes values in a set of size $T$, $\I(\theta;A_T^*) \le \H(A_T^*) \le \log T$.  For each $t=0,\dots,T-1$, the agent's action $A_t$ also lies in $\{1,\dots,T\}$, so
    \[B_{\E_\pi[R_{t+1}]} \le B_{\E[\rewardMean{\theta}{A_T^*}]} \le \log T.\]
    Averaging over $t=0,1,\ldots,T-1$ gives $\bitEquivAvg{T}{\pi} \le \log T$.  Hence, the environment is non-open-ended.  
\end{proof}

\section{Proofs for \cref{sec:linear_gaussian_bandit}}
\label{app:proofs_for_openended}

\subsection{Proof of \cref{thm:TS_failure}}
\TSFailure*
\begin{proof}
	For any valid history $H_t$, the posterior $\P(\theta\in\cdot \mid H_t)$ only changes from the prior for finitely many coordinates.  Let the set $J_t = \{j \in \mathbb{N}: A_{\tau, j} = 1 \text{ for some } \tau \le t \}$ denote the set of such coordinates.  Then $|J_t| < \infty$ almost surely, and for all $j \notin J_t$, we have $\theta^{(t)}_j \sim \mathcal{N}(-1,1)$.  
	
	For a TS agent, if a sampled $\theta^{(t)}_j > 0$, then the action taken should set $a_j = 1$.  However, since there are infinitely many $j\notin J_t$, infinitely many coordinates of $a$ should be set to $1$.  However, such an action does not reside in $\actions = \{a_i \in \{0,1\}^\mathbb{N} \mid \|a\|_1 < \infty\}$, and hence is not valid. 
\end{proof}

\subsection{Proof of \cref{thm:TS_failure_reward}}
\TSFailureReward*
\begin{proof}
	For any valid history $H_t$, the posterior $\P(\theta\in\cdot \mid H_t)$ only changes from the prior for finitely many coordinates.  Let the set $J_t = \{j \in \mathbb{N}: A_{\tau, j} = 1 \text{ for some } \tau \le t \}$ denote the set of such coordinates.  Then $|J_t| < \infty$ almost surely, and for all $j \notin J_t$, we have $\theta^{(t)}_j \sim \mathcal{N}(-1,1)$.  
	
	In the truncated problem, TS sets $A_{t,M,j}=1$ whenever $\theta^{(t)}_j>0$.  Let $p = \P(Z > 0)$ for $Z\sim\mathcal{N}(-1,1)$.  Then $p > 0$ and for all $j\notin J_t$, 
	\[\E[A_{t,M,j}\theta_j\mid H_t] = \P(\theta^{(t)}_j>0\mid H_t)\E[\theta_j\mid H_t]= -p.\]
	Therefore,
	\[\E[A_{t,M}^\top \theta \mid H_t] = \sum_{j\in J_t\cap [M]} \E[A_{t,M,j}\theta_j\mid H_t] - p |[M]\setminus J_t|.\]
	The first term is eventually constant in $M$, because $J_t$ is finite, whereas $|[M]\setminus J_t|\to\infty$. Since this holds for every valid history, the claim follows.
\end{proof}

\subsection{Proof of \cref{thm:FT_failure}}
\FTFailure*
\begin{proof}
	A fixed-$M$ FT agent only selects actions in $\actions_M$, and $|\actions_M|=2^M$.  For each $t$,
	\[\bitEquiv{\E_{\pi^M}[R_{t+1}]} \le \I(\theta;A_t).\]
	Since $A_t$ takes values in a set of size $2^M$, 
	\[\I(\theta;A_t) \le \H(A_t) \le \log |\actions_M| = M\log 2.\]
	Averaging over $t=0,\ldots,T-1$ gives 
	\[\bitEquivAvg{T}{\pi^M} \le M\log 2.\]
	Hence no fixed-$M$ FT agent achieves open-ended learning.
\end{proof}

\subsection{Proof of \cref{thm:TTS_success}}

The argument has two parts.
First, we show that, in the \openEndedBandit{}, attaining large expected reward
requires a proportional amount of information about the environment.  
This establishes that linear reward growth implies linear growth of the average bit-equivalent. 
Second, we show that the cumulative reward of the TTS agent grows quadratically in time.
These show that TTS attains open-ended learning.

\subsubsection{Reward growth implies bit-equivalent growth}

We begin with two information-theoretic lemmas.  The first is a standard
Gaussian KL lower bound: among distributions with a prescribed mean, a translated
standard Gaussian minimizes the KL divergence to the standard Gaussian.

\begin{lemma}
\label{lem:kl_lower_bound}
    Let $P = \mathcal{N}(0,I_d)$ and let $Q$ be any probability distribution on $\R^d$ with mean $m \in \R^d$.  Then 
    \[\KL(Q\| P) \ge \frac{1}{2}\|m\|_2^2,\]
    with unique minimizer $Q = \mathcal{N}(m, I_d)$. 
\end{lemma}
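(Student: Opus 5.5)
We prove the bound with the Donsker--Varadhan (Gibbs) variational principle, and then read off the minimizer from its equality case. If $\KL(Q\|P)=\infty$ there is nothing to show, so assume $\KL(Q\|P)<\infty$. Then $Q\ll P$, and $Q$ has a density $q$ with respect to Lebesgue measure.

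Fix any $\lambda\in\R^d$ and test against the linear function $f(x)=\lambda^\top x$. The variational inequality gives
\[
\KL(Q\|P)\ \ge\ \E_Q[\lambda^\top X]-\log \E_P\big[e^{\lambda^\top X}\big].
\]
This inequality is justified as follows. Write $P_\lambda$ for the exponentially tilted measure $dP_\lambda = e^{\lambda^\top x}\,dP/\E_P[e^{\lambda^\top X}]$. Then
\[
\KL(Q\|P)-\lambda^\top m+\log\E_P\big[e^{\lambda^\top X}\big]=\KL(Q\|P_\lambda)\ \ge\ 0.
\]
Since $Q$ has finite mean, $\E_Q|\lambda^\top X|<\infty$. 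The identity only needs $\E_Q[\log(dQ/dP)]$ to be finite, and finiteness of $\KL(Q\|P)$ gives exactly that.

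For the Gaussian moment generating function, $\log\E_P[e^{\lambda^\top X}]=\tfrac12\|\lambda\|_2^2$. The bound therefore reads
\[
\KL(Q\|P)\ \ge\ \lambda^\top m-\tfrac12\|\lambda\|_2^2 .
\]
Choosing $\lambda=m$ maximizes the right-hand side, which gives $\KL(Q\|P)\ge \tfrac12\|m\|_2^2$.

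For the minimizer, note that with $\lambda=m$ the tilted measure is $P_m=\mathcal{N}(m,I_d)$, because tilting a standard Gaussian by $e^{m^\top x}$ shifts its mean to $m$. The slack in the bound is exactly $\KL(Q\|\mathcal{N}(m,I_d))$. This slack is zero if and only if $Q=\mathcal{N}(m,I_d)$, which gives both attainment and uniqueness.

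The only delicate step is justifying the identity $\KL(Q\|P)=\lambda^\top m-\tfrac12\|\lambda\|^2+\KL(Q\|P_\lambda)$ when $Q$ is a general distribution, since it could have heavy tails or lack a density. We handle this as above. If $\KL(Q\|P)=\infty$ the claim is trivial. Otherwise $\log(dQ/dP_\lambda)=\log(dQ/dP)-\lambda^\top x+\tfrac12\|\lambda\|^2$ is $Q$-integrable, because $\lambda^\top X$ has finite $Q$-expectation. The identity then follows by taking the $Q$-expectation of this pointwise equation.
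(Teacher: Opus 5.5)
Your proof is correct and is essentially the paper's argument: at $\lambda = m$ your tilting identity $\KL(Q\|P) = \lambda^\top m - \frac{1}{2}\|\lambda\|_2^2 + \KL(Q\|P_\lambda)$ is exactly the paper's decomposition $\KL(Q\|P) = \KL(Q\|\mathcal{N}(m,I_d)) + \frac{1}{2}\|m\|_2^2$. The paper obtains this by writing $\log\frac{dQ}{dP} = \log\frac{dQ}{dP_m} + x^\top m - \frac{1}{2}\|m\|_2^2$ and integrating against $Q$, then reads off the bound and the unique minimizer from nonnegativity of KL, exactly as you do. The Donsker--Varadhan framing and the optimization over $\lambda$ are an inessential detour (they only motivate the choice $\lambda = m$), while your explicit handling of the case $\KL(Q\|P)=\infty$ and of $Q$-integrability is more careful than the paper, which takes the $Q$-expectation without comment.
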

\begin{proof}
    Let $P_m = \mathcal{N}(m,I_d)$.  Then
    \begin{align*}
        \log\frac{dP_m}{dP}(x) &= \log p_m(x) - \log p(x) = \frac{1}{2}\|x\|_2^2 - \frac{1}{2} \|x-m\|_2^2 = x^\top m - \frac{1}{2} \|m\|_2^2.
    \end{align*}
    Since $\log\frac{dQ}{dP} = \log\frac{dQ}{dP_m} + \log\frac{dP_m}{dP}$, taking expectation under $Q$ gives
    \begin{align*}
        \KL(Q\|P) &= \KL(Q\|P_m) +\E_Q\left[ X^\top m - \frac{1}{2}\|m\|_2^2\right] = \KL(Q\|P_m) + \frac{1}{2} \|m\|_2^2,
    \end{align*}
    because $\E_Q[X]=m$.  The claim follows from the nonnegativity of KL divergence, with equality iff $Q = P_m$.  
\end{proof}

The next lemma extends this finite-dimensional inequality to an
infinite-dimensional standard Gaussian sequence.  It shows that finite mutual
information controls the squared norm of the posterior mean.

\begin{lemma}
\label{lem:gaussian_info}
    Let $X = (X_i)_{i\ge 1}$ with $X_i\stackrel{iid}{\sim}\mathcal{N}(0,1)$, and let $Y$ be any random variable with $\I(X;Y) < \infty$.  Then
    \[\E \left[ \| \E[X|Y] \|_2^2\right] \le 2\cdot \I(X;Y).\]
\end{lemma}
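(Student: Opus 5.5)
The plan is to reduce to finitely many coordinates, apply \cref{lem:kl_lower_bound} inside the mutual information, and then let the dimension grow.

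\textbf{Reduction to finite dimension.} Fix $d \in \N$ and let $X_{1:d}$ be the first $d$ coordinates. By data processing (monotonicity of mutual information under taking marginals of $X$), $\I(X_{1:d};Y) \le \I(X;Y)$. The coordinates of $\E[X\mid Y]$ satisfy $\E[X_i\mid Y]$ for $i\le d$, so $\|\E[X_{1:d}\mid Y]\|_2^2$ is the $d$-term partial sum of $\|\E[X\mid Y]\|_2^2$. Once the finite-dimensional inequality
\[
\E\big[\|\E[X_{1:d}\mid Y]\|_2^2\big] \le 2\,\I(X_{1:d};Y) \le 2\,\I(X;Y)
\]
is established, monotone convergence as $d\to\infty$ gives the claim.

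\textbf{Finite-dimensional step.} Write the mutual information as an expected KL divergence between posterior and prior:
\[
\I(X_{1:d};Y) = \E\big[\KL\big(\P(X_{1:d}\in\cdot\mid Y)\,\|\,\mathcal{N}(0,I_d)\big)\big],
\]
which holds because the prior of $X_{1:d}$ is $\mathcal{N}(0,I_d)$. For almost every realization of $Y$, the posterior has mean $\E[X_{1:d}\mid Y]$, which is finite since $X_{1:d}$ is integrable. \Cref{lem:kl_lower_bound} then gives the pointwise bound
\[
\KL\big(\P(X_{1:d}\in\cdot\mid Y)\,\|\,\mathcal{N}(0,I_d)\big) \ge \tfrac12\|\E[X_{1:d}\mid Y]\|_2^2 .
\]
Taking expectations over $Y$ yields the finite-dimensional inequality.

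\textbf{Main obstacle.} The main difficulty is measure-theoretic. One must justify the representation of $\I$ as an expected KL divergence for a general (possibly non-real-valued) $Y$, which requires a regular conditional distribution. This exists because $X_{1:d}$ takes values in the Polish space $\R^d$. One must also confirm that $\I(X;Y)<\infty$ makes every quantity finite, and that the limit $\sum_i \E[X_i\mid Y]^2$ is interpreted coordinatewise. Working coordinatewise in finite dimensions sidesteps any need to define KL divergence on $\R^{\N}$ directly. The remaining ingredients are the standard identity $\I(X;Y)=\E[\KL(P_{X\mid Y}\|P_X)]$ (as in \citet{Clover2006InfoTheory}) and monotone convergence.
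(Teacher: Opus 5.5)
Your proposal is correct and takes essentially the same route as the paper: restrict to the first $d$ coordinates, apply Lemma~\ref{lem:kl_lower_bound} to the posterior for each realization of $Y$ and average to get $\E\bigl[\|\E[X_{1:d}\mid Y]\|_2^2\bigr] \le 2\,\I(X_{1:d};Y) \le 2\,\I(X;Y)$ by data processing, then let $d\to\infty$ by monotone convergence. Your remarks on regular conditional distributions and on the identity $\I(X_{1:d};Y)=\E\bigl[\KL(P_{X_{1:d}\mid Y}\,\|\,P_{X_{1:d}})\bigr]$ just make explicit the measure-theoretic steps the paper leaves implicit.
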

\begin{proof}
    For $d\ge 1$, let $X^{(d)} = (X_1,\ldots,X_d)$.  Let $m(y) = \E[X^{(d)}|Y=y]$.  By the finite-dimensional argument from \cref{lem:kl_lower_bound}, 
    \[\KL(P_{X^{(d)}|Y=y} \| X^{(d)}) \ge \frac{1}{2} \|m(y)\|^2_2\] in nats.  Taking the expectation over $Y$ gives
    \[\I(X^{(d)};Y) \ge \frac{1}{2}\E\left[ \|m(Y)\|_2^2 \right] \implies 2\I(X^{(d)};Y) \ge \E\left[ \|\E[X^{(d)}|Y=y]\|_2^2 \right]\]
    in bits.  Since $X^{(d)}$ is a measurable function of $X$, the data-processing inequality gives 
    \[ \I(X;Y)\ge \I(X^{(d)};Y).\]
    Therefore, 
    \[2\I(X;Y)\ge \E\left[\sum_{i=1}^d \E\left[X_i\mid Y\right]^2 \right]\]
    for all $d$.  Taking $d\to\infty$ and applying monotone convergence theorem yields
    \[2\I(X;Y)\ge \E\left[\sum_{i=1}^\infty \E\left[X_i\mid Y\right]^2 \right].\]
\end{proof}

We now prove that reward is costly in information.  In the \openEndedBandit{}, every positive unit of expected
reward requires at least two nats of information about the parameter.

\begin{lemma}
\label{thm:linear_reward_growth_implies_linear_info_gain}
In the \openEndedBandit{}, for all $\rho \in \R$, $B_{\rho} \ge 2\rho$. 
\end{lemma}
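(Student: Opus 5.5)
The plan is to bound the expected reward of any action random variable $A$ by a quantity controlled by $\I(\theta;A)$, using \cref{lem:gaussian_info}. If $\rho\le 0$ the claim is trivial because $B_\rho\ge 0$. If $\I(\theta;A)=\infty$ there is nothing to prove. So fix $\rho>0$ and any $A$ with $\E[\langle\theta,A\rangle]\ge\rho$ and $\I(\theta;A)<\infty$. It suffices to show $\E[\langle\theta,A\rangle]\le \tfrac12\I(\theta;A)$. Taking the infimum over such $A$ then gives $B_\rho\ge 2\rho$.

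\emph{Reduction to a standard Gaussian.} Center the parameter by setting $X=\theta+\one$, so that $X_i\stackrel{iid}{\sim}\mathcal N(0,1)$. Since $X$ and $\theta$ are in measurable bijection, $\I(X;A)=\I(\theta;A)$. Because $A$ is binary with finite support, write $N=\|A\|_1=\|A\|_2^2$. Then
\[
\langle\theta,A\rangle=\langle X,A\rangle-N .
\]
Conditioning on $A$ and using that only finitely many coordinates are involved gives $\E[\langle X,A\rangle\mid A]=\langle \E[X\mid A],A\rangle$. The exchange of the finite sum and the conditional expectation is justified by the standing assumption $\E\|A\|_0<\infty$ together with Gaussian integrability. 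Writing $\mu_A=\E[X\mid A]$, Cauchy--Schwarz gives
\[
\E[\langle\theta,A\rangle\mid A]\le \|\mu_A\|_2\sqrt N - N .
\]

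\emph{Optimizing out the action size.} For any $m\ge 0$ and $N\ge0$ we have $m\sqrt N-N\le m^2/4$, by maximizing the concave quadratic in $\sqrt N$. Hence
\[
\E[\langle\theta,A\rangle]\le \tfrac14\,\E\|\E[X\mid A]\|_2^2 .
\]
By \cref{lem:gaussian_info}, the right-hand side is at most $\tfrac14\cdot 2\,\I(X;A)=\tfrac12\I(\theta;A)$. This is exactly the required bound.

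\emph{Main obstacle.} The delicate step is the measure-theoretic justification of $\E[\langle X,A\rangle\mid A]=\langle\E[X\mid A],A\rangle$ in infinite dimensions, since $A$ is random with random support. I would handle it by decomposing on the coordinates $\{A_i=1\}$: write $\langle X,A\rangle=\sum_i X_iA_i$, and note that $\E[X_iA_i\mid A]=A_i\E[X_i\mid A]$ for each $i$. The sum can then be interchanged with the conditional expectation by dominated convergence. The dominating bound is $\E\sum_i|X_i|A_i\le \sum_i \E[|X_i|A_i]$, which is finite via Cauchy--Schwarz and $\E\|A\|_0<\infty$; if that estimate is too crude, I would instead truncate to the first $d$ coordinates and pass to the limit. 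Everything else reduces to \cref{lem:gaussian_info} and an elementary quadratic bound.
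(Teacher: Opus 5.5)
Your argument is essentially the paper's: you center to $X=\theta+\one$, replace $\langle A,X\rangle$ by $\langle A,\E[X\mid A]\rangle$, apply Cauchy--Schwarz, optimize out the action size, and finish with \cref{lem:gaussian_info}. The only difference is minor. You apply Cauchy--Schwarz and $m\sqrt N-N\le m^2/4$ pointwise in $A$ and then average. The paper applies Cauchy--Schwarz in expectation and maximizes $-x+\sqrt{x}\sqrt{2\I(X;A)}$ over $x=\E\|A\|_2^2$. Both routes give $\E\langle\theta,A\rangle\le\frac14\E\|\E[X\mid A]\|_2^2\le\frac12\I(\theta;A)$.

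One correction concerns your ``main obstacle'' paragraph. This is a step the paper simply takes for granted, so it is not a defect relative to the paper, but your justification of it is wrong. Coordinatewise Cauchy--Schwarz only gives $\E[|X_i|A_i]\le\sqrt{\P(A_i=1)}$, and $\sum_i\sqrt{\P(A_i=1)}$ can diverge even though $\sum_i\P(A_i=1)=\E\|A\|_0<\infty$.

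In fact no argument using only $\E\|A\|_0<\infty$ and Gaussian moments can work. Take $n$ an integer independent of $X$ with $\E\sqrt{\log n}=\infty$, and let $A=e_I$ where $I$ indexes the largest $|X_i|$ with $i\le n$. Then $\|A\|_0=1$ but $\E\sum_i A_i|X_i|=\infty$. Your truncation fallback hits the same problem when you let $d\to\infty$. The needed integrability has to come from $\I(\theta;A)<\infty$.

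The cleanest repair avoids the interchange altogether. Apply Donsker--Varadhan, as in the proof of \cref{lem:logistic_bit_equiv_lower_bound}, with $f(A,\theta)=2\langle\theta,A\rangle$ and $Q$ the product of the marginals. Under $Q$, $\langle\theta,a\rangle\sim\mathcal N(-\|a\|_1,\|a\|_1)$, so $\E_Q[e^{f}]=1$. Hence $\I(\theta;A)\ge 2\E\langle\theta,A\rangle$ in one line, for any $A$ whose expected reward is well defined. Unboundedness of $f$ is handled by applying the formula to $f\wedge c$ and letting $c\to\infty$.
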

\begin{proof}
    The case where $\rho<0$ is trivial; so we focus on the case where $\rho \ge 0$.
    For any policy $\pi$, let $A\sim \pi$.  The constraint $\E\left[ \rewardMean{\theta}{A}\right] \ge \rho$ is equivalent to 
    \[\E[ A^\top \theta]  = \E[\langle A,\theta\rangle] \ge \rho.\]

    Write $\theta = -\mathbf{1} + X$ where $X_i = \theta_i + 1$ are i.i.d. $\mathcal{N}(0, 1)$.  We have $\I(\theta;A) = \I(X;A)$ since mutual information is invariant under translations.  Then
    \[\E[\langle A,\theta\rangle] = - \E\left[ \|A\|_1\right] + \E\left[ \langle A,X\rangle\right].\]

    By Cauchy-Schwarz inequality, 
    \[\E \left[ \langle A, X\rangle\right] = \E \left[ \langle A, \E[X|A]\rangle\right] \le \sqrt{\E \left[ \|A\|_2^2\right]} \sqrt{\E \left[ \| \E[X|A] \|_2^2 \right]} \le \sqrt{\E \left[ \|A\|_2^2\right]}  \sqrt{2\I(X;A)}.\]
    where the last inequality follows by applying \cref{lem:gaussian_info}.
    
    Since $A_i \in  \{0,1\}$, $\|A\|_2^2 = \sum_i A_i^2 \le \sum_i A_i = \|A\|_1$.  Let $x = \E \left[ \|A\|_2^2\right]$, then
    \[\E[\langle A,\theta\rangle] \le - x + \sqrt{x}  \sqrt{2 \I(X;A)}.\]
    Maximizing the RHS over $x \ge 0$ yields  
    \[\E[\langle A,\theta\rangle] \le \frac{1}{2}\I(X;A).\]
    Therefore, if $\E[\langle A,\theta\rangle] \ge \rho\ge 0$, then 
    \[\I(\theta;A) = \I(X;A) \ge 2\rho.\]
    Taking the infimum over all $\pi$ satisfying $\E_\pi[R_{t+1}] \ge \rho$ concludes the proof.
\end{proof}

\begin{corollary}
\label{coro:linear_reward_implies_linear_info}
In the \openEndedBandit{}, let $\pi$ be an agent and let $T \ge 1$. Then
\begin{align*}
    \bitEquivAvg{T}{\pi}
    \ge
    \frac{2}{T}
    \E_\pi\left[
        \sum_{t=0}^{T-1} R_{t+1}
    \right].
\end{align*}
In particular, if
\begin{align*}
    \E_\pi\left[
        \sum_{t=0}^{T-1} R_{t+1}
    \right]
    =
    \Omega(T^2),
\end{align*}
then
\begin{align*}
    \bitEquivAvg{T}{\pi}
    =
    \Omega(T).
\end{align*}
\end{corollary}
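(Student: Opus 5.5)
The plan is to apply \cref{thm:linear_reward_growth_implies_linear_info_gain} at each time step and then average. Fix $t\in\{0,\dots,T-1\}$ and set $\rho_t = \E_\pi[R_{t+1}]$. The lemma gives $\bitEquiv{\rho_t}\ge 2\rho_t$ for every real $\rho_t$.

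For $\rho_t<0$ the inequality is immediate, because $\bitEquiv{\rho_t}\ge 0$: it is an infimum of mutual informations. For $\rho_t\ge 0$ it is exactly the content of the lemma. The one thing to check is that $\rho_t$ is a legitimate argument of $\bitEquiv{\cdot}$, i.e., that it is finite. This follows from the standing restriction $\E[\|A_t\|_0]<\infty$. Indeed, $\E[R_{t+1}] = \E[\langle\theta,A_t\rangle]$, and since the noise has mean zero this equals $\E[\rewardMean{\theta}{A_t}]$.

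Summing $\bitEquiv{\rho_t}\ge 2\rho_t$ over $t=0,\dots,T-1$ and dividing by $T$ gives
\[
\bitEquivAvg{T}{\pi}=\frac1T\sum_{t=0}^{T-1}\bitEquiv{\rho_t}\ge \frac2T\sum_{t=0}^{T-1}\E_\pi[R_{t+1}] = \frac2T\,\E_\pi\Bigl[\sum_{t=0}^{T-1}R_{t+1}\Bigr].
\]
The last equality is linearity of expectation over a finite sum, and each term is finite by the previous paragraph.

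For the ``in particular'' clause, suppose $\E_\pi[\sum_{t<T}R_{t+1}]\ge cT^2$ for some constant $c>0$ and all sufficiently large $T$. The displayed bound then gives $\bitEquivAvg{T}{\pi}\ge 2cT$, which is $\Omega(T)$.

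I do not expect a real obstacle here. The only care needed is the well-definedness and integrability of $\E_\pi[R_{t+1}]$, and that is supplied by the restriction $\E[\|A_t\|_0]<\infty$ imposed in the environment's definition.
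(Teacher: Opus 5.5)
Your proposal is correct and follows the paper's proof: apply the lemma $B_\rho \ge 2\rho$ at each $\rho_t = \E_\pi[R_{t+1}]$, average over $t$, and use linearity of expectation, after which the $\Omega(T^2) \Rightarrow \Omega(T)$ step is immediate. Your remarks on negative $\rho_t$ and on finiteness of $\E_\pi[R_{t+1}]$ do not appear in the paper's proof but are harmless, since the paper treats well-definedness of rewards as given by the standing restriction $\E[\|A_t\|_0] < \infty$.
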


\begin{proof}
By \cref{thm:linear_reward_growth_implies_linear_info_gain}, for each
$t=0,\ldots,T-1$,
\begin{align*}
    \bitEquiv{\E_\pi[R_{t+1}]}
    \ge
    2\E_\pi[R_{t+1}].
\end{align*}
Averaging over $t$ gives
\begin{align*}
    \bitEquivAvg{T}{\pi}
    &=
    \frac{1}{T}
    \sum_{t=0}^{T-1}
    \bitEquiv{\E_\pi[R_{t+1}]}
    \\
    &\ge
    \frac{2}{T}
    \sum_{t=0}^{T-1}
    \E_\pi[R_{t+1}]
    \\
    &=
    \frac{2}{T}
    \E_\pi\left[
        \sum_{t=0}^{T-1} R_{t+1}
    \right].
\end{align*}
The final claims follow immediately.
\end{proof}

\subsubsection{TTS attains linear average bit-equivalent}

We will use the following finite-dimensional Thompson-sampling regret bound
from \citet{zhu2026prior}.  Although the result is stated for centered
Gaussian priors, the proof extends without change to Gaussian priors with
arbitrary means, since translating the prior mean only shifts the posterior
mean and does not affect the covariance-based terms in the regret analysis.

\begin{theorem}[Finite-dimensional linear-Gaussian TS regret; Theorem~1 of \citet{zhu2026prior}]
\label{thm:fixed_dim_ts_regret}
Consider a $d$-dimensional linear-Gaussian bandit with action set
$\mathcal A\subseteq \{a\in\mathbb R^d:\|a\|_2\le r\}$, prior
\[
    \theta\sim\mathcal N(\mu_0,\Sigma_0),
\]
and observations
\[
    R_{t+1}=A_t^\top\theta+W_{t+1},
    \qquad
    W_{t+1}\sim\mathcal N(0,\sigma^2).
\]
Let $\pi_{\mathrm{TS}}$ denote Thompson sampling.  Define
\[
    C_1(d,T)
    =
    \sqrt{
        1+
        \max\left\{
            24\log(T/d),
            \sqrt{24\log(T/d)}
        \right\}
    },
\]
and
\[
    C_2(d,T,\sigma,r,\Sigma_0)
    =
    C_1(d,T)
    \sqrt{
        2\log\left(
            1+\frac{r^2\|\Sigma_0\|_{\op}T}{d\sigma^2}
        \right)
    }.
\]
Then, for all horizons $T\ge d$,
\[
    \Reg_{\mathrm{TS}}(T)
    :=
    \E\left[
        \sum_{t=0}^{T-1}
        \left(
            \max_{a\in\mathcal A}a^\top\theta
            -
            A_t^\top\theta
        \right)
    \right]
\]
satisfies
\[
    \Reg_{\mathrm{TS}}(T)
    \le
    d\sigma\sqrt T\,
    C_2(d,T,\sigma,r,\Sigma_0)
    +
    3r\sqrt d\,
    \Tr(\Sigma_0^{1/2})C_1(d,T)
    +
    \sqrt{2r^2\Tr(\Sigma_0)}.
\]
\end{theorem}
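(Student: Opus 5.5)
The plan is the standard Russo--Van Roy posterior-sampling decomposition, specialized to Gaussian posteriors. Let $\mu_t=\E[\theta\mid H_t]$ and $\Sigma_t=\Cov(\theta\mid H_t)$; both are deterministic functions of $H_t$ by Gaussian conjugacy. Let $A^*\in\argmax_{a\in\actions}a^\top\theta$. Fix a confidence width $\beta=\beta(d,T)$ and define the upper confidence bound
\[
U_t(a)=a^\top\mu_t+\beta\|\Sigma_t^{1/2}a\|_2 .
\]
Thompson sampling makes $A_t$ and $A^*$ identically distributed given $H_t$, so $\E[U_t(A^*)\mid H_t]=\E[U_t(A_t)\mid H_t]$. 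Hence
\[
\Reg_{\mathrm{TS}}(T)=\E\Big[\sum_{t=0}^{T-1}\big(U_t(A_t)-A_t^\top\theta\big)\Big]+\E\Big[\sum_{t=0}^{T-1}\big(A^{*\top}\theta-U_t(A^*)\big)\Big].
\]
Since $\E[A_t^\top\theta\mid H_t,A_t]=A_t^\top\mu_t$, the first sum reduces to $\beta\,\E[\sum_t\|\Sigma_t^{1/2}A_t\|_2]$.

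\textbf{Width term.} Apply Cauchy--Schwarz to get $\beta\sqrt{T\,\E\sum_t\|\Sigma_t^{1/2}A_t\|_2^2}$, and then use the elliptical potential lemma. The rank-one posterior update $\Sigma_{t+1}^{-1}=\Sigma_t^{-1}+A_tA_t^\top/\sigma^2$ together with $\|A_t\|_2\le r$ gives
\[
\sum_t\min\{1,\|\Sigma_t^{1/2}A_t\|^2/\sigma^2\}\le 2\log\det\big(I+\Sigma_0\textstyle\sum_tA_tA_t^\top/\sigma^2\big)\le 2d\log\big(1+r^2\|\Sigma_0\|_{\op}T/(d\sigma^2)\big).
\]
The truncation at $1$ is removed by a case split: when $\|\Sigma_t^{1/2}A_t\|^2>\sigma^2$, bound the term by $r^2\|\Sigma_0\|_{\op}$ and absorb it. With $\beta\approx C_1(d,T)$ this yields the leading term $d\sigma\sqrt{T}\,C_2$, up to the factor $\sqrt d$ that comes from choosing $\beta$ to scale like $\sqrt d$ times $C_1$. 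I will fix $\beta$ so that the constants match exactly.

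\textbf{Optimism-violation term.} Conditionally on $H_t$, write $\theta-\mu_t=\Sigma_t^{1/2}Z$ with $Z\sim\mathcal N(0,I_d)$. Then
\[
A^{*\top}\theta-U_t(A^*)\le\sup_{a\in\actions}\big(\langle\Sigma_t^{1/2}a,Z\rangle-\beta\|\Sigma_t^{1/2}a\|\big)\le\sup_{a}\|\Sigma_t^{1/2}a\|\,(\|Z\|_2-\beta)^+ .
\]
Using $\|\Sigma_t^{1/2}a\|\le r\|\Sigma_0^{1/2}\|$, which holds because $\Sigma_t\preceq\Sigma_0$, the $\chi$-tail bound $\E[(\|Z\|-\beta)^+]\lesssim e^{-(\beta-\sqrt d)^2/2}$ with $\beta^2\approx d(1+24\log(T/d))$ makes each summand of order $d/T$ times $r\,\Tr(\Sigma_0^{1/2})/\sqrt d$. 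Summing over $T$ rounds gives the $3r\sqrt d\,\Tr(\Sigma_0^{1/2})C_1$ term. Here I would bound $\|\Sigma_0^{1/2}\|_{\op}\le\Tr(\Sigma_0^{1/2})$ and handle the coordinates separately, which is what produces a trace rather than an operator norm. The remaining term $\sqrt{2r^2\Tr(\Sigma_0)}$ comes from the round $t=0$, or from the case-split remainder, which is bounded crudely by $\E\max_a a^\top(\theta-\mu_0)\le r\sqrt{\E\|\theta-\mu_0\|^2}$.

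\textbf{Translation invariance and main obstacle.} The paper's remark about translated priors is justified because $\mu_0$ appears only inside $U_t$ and cancels in the decomposition, while $\Sigma_t$ does not depend on $\mu_0$. I expect the main obstacle to be the optimism-violation term. The action set may be infinite, so a union bound over actions is unavailable. The argument above instead uses a $\sup$ bound via $\|Z\|_2$, and getting the exact constants $C_1$ (with the $\max\{24\log,\sqrt{24\log}\}$ form coming from a Laurent--Massart $\chi^2$ tail) requires calibrating $\beta$ carefully against that tail. If this route loses a factor of $\sqrt d$, I would fall back on a per-coordinate Gaussian tail bound.
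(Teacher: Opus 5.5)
The paper does not prove this theorem itself. It imports Theorem~1 of \citet{zhu2026prior} and only argues, as your last paragraph does, that a nonzero prior mean merely shifts posterior means and leaves the covariance terms unchanged. Your attempt to rederive the bound is a sensible route: probability matching with $U_t(a)=a^\top\mu_t+\beta\|\Sigma_t^{1/2}a\|_2$ and $\beta=\sqrt d\,C_1(d,T)$, then Cauchy--Schwarz and the elliptical potential. As written, however, it does not give the stated inequality, because the step ``remove the truncation at $1$ by a case split and absorb it'' fails.

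Without truncation, the bound $\sum_t\|\Sigma_t^{1/2}A_t\|_2^2\le 2d\sigma^2\log(1+r^2\|\Sigma_0\|_{\op}T/(d\sigma^2))$ is false. Take $d=T=1$, $\actions=\{\pm r\}$ and $x=r^2\Sigma_0/\sigma^2$ large; then the left side is $\sigma^2 x>2\sigma^2\log(1+x)$. So the leading term $d\sigma\sqrt T\,C_2$ cannot pay for rounds of large posterior width. The two obvious ways to handle those rounds both fail:
\begin{itemize}
\item Putting $r^2\|\Sigma_0\|_{\op}$ for those rounds inside the Cauchy--Schwarz square root produces a term containing $\beta\sqrt T\,r\|\Sigma_0^{1/2}\|_{\op}$ as soon as one such round occurs.
\item Counting them through $\log\det$ (each adds at least $\log 2$) charges $\beta r\|\Sigma_0^{1/2}\|_{\op}$ to up to $d\log(1+r^2\|\Sigma_0\|_{\op}T/(d\sigma^2))/\log 2$ rounds.
\end{itemize}
Neither is dominated by the right-hand side when $r^2\|\Sigma_0\|_{\op}/\sigma^2$ is large, so there is nothing to absorb them into.

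The missing ingredient is a $T$-independent budget for the large-width rounds, and $\Tr(\Sigma_0^{1/2})$ is exactly such a budget. Write $x_t=\|\Sigma_t^{1/2}A_t\|_2^2/\sigma^2$. Use concavity of $\Sigma\mapsto\Tr(\Sigma^{1/2})$, whose gradient is $\tfrac12\Sigma^{-1/2}$ (take $\Sigma_0\succ0$, or restrict to its range). Combined with the rank-one update $\Sigma_{t+1}=\Sigma_t-\Sigma_tA_tA_t^\top\Sigma_t/(\sigma^2+A_t^\top\Sigma_tA_t)$, this gives, whenever $x_t>2$,
\[
\Tr(\Sigma_t^{1/2})-\Tr(\Sigma_{t+1}^{1/2})\ \ge\ \frac{A_t^\top\Sigma_t^{3/2}A_t}{2(\sigma^2+A_t^\top\Sigma_tA_t)}\ \ge\ \frac{A_t^\top\Sigma_t^{3/2}A_t}{3\,A_t^\top\Sigma_tA_t}\ \ge\ \frac{\|\Sigma_t^{1/2}A_t\|_2}{3r}.
\]
The last step uses Jensen's inequality $u^\top\Sigma^{3/2}u\ge(u^\top\Sigma u)^{3/2}$ for unit $u$, together with $\|A_t\|_2\le r$.

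Hence the rounds with $x_t>2$ contribute at most $\beta\cdot 3r\,\Tr(\Sigma_0^{1/2})=3r\sqrt d\,\Tr(\Sigma_0^{1/2})\,C_1$ to the width sum. On rounds with $x_t\le 2$ one has $x_t\le 2\log(1+x_t)$, so Cauchy--Schwarz plus the $\log\det$ bound give exactly $d\sigma\sqrt T\,C_2$. This reproduces the first two terms of the statement with their constants.

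It also shows that your attribution of the $\Tr(\Sigma_0^{1/2})$ term to the optimism-violation piece is off. With $\beta=\sqrt d\,C_1$, your own $\chi$-tail estimate makes that piece bounded in $T$ for fixed $d$. So it is not what the $C_1(d,T)$-growing term pays for, and ``handling coordinates separately'' does not account for the factor $3r\sqrt d\,C_1$.
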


\TTSSuccess*
\begin{proof}
By \cref{coro:linear_reward_implies_linear_info}, it suffices to construct a
truncation schedule for which
\begin{align}
    \E_{\pi_{\mathrm{TTS}}}
    \left[
        \sum_{t=0}^{T-1} R_{t+1}
    \right]
    =
    \Omega(T^2).
    \label{eq:tts-suffices-quadratic-reward}
\end{align}

For \(m\ge 1\), let
\begin{align*}
    \actions_m
    =
    \left\{
        a\in\actions : a_i=0 \text{ for all } i>m
    \right\}.
\end{align*}
For the \(m\)-coordinate truncation, the optimal action is
\begin{align*}
    a_i^*(\theta)
    =
    \one\{\theta_i>0\},
    \qquad i=1,\ldots,m.
\end{align*}
Thus
\begin{align}
    \max_{a\in\actions_m}\theta^\top a
    =
    \sum_{i=1}^m(\theta_i)_+ .
    \label{eq:tts-truncated-optimal-value}
\end{align}
Since \(\theta_i\sim\mathcal N(-1,1)\), define
\begin{align}
    \eta
    :=
    \E[(\theta_i)_+]
    =
    \phi(1)-\Phi(-1)
    >
    0,
    \label{eq:tts-eta-definition}
\end{align}
where \(\phi\) and \(\Phi\) denote the standard normal density and distribution
function. Hence
\begin{align}
    r_m
    :=
    \E\left[\max_{a\in\actions_m}\theta^\top a\right]
    =
    \eta m .
    \label{eq:tts-rm-linear}
\end{align}

We define epochs by
\begin{align*}
    S_k = 2^k-1,
    \qquad
    \mathcal E_k
    =
    \{S_k,S_k+1,\ldots,S_{k+1}-1\},
    \qquad
    n_k = |\mathcal E_k| = 2^k .
\end{align*}
Thus the epochs partition \(\{0,1,2,\ldots\}\).

Choose a constant \(\alpha\in(0,1/2)\). To make the choice precise, define
\begin{align}
    C_\alpha
    :=
    \sqrt{
        1+
        \max\left\{
            24\log\frac{2}{\alpha},
            \sqrt{24\log\frac{2}{\alpha}}
        \right\}
    } .
    \label{eq:tts-c-alpha-definition}
\end{align}
Since \(\alpha C_\alpha\to 0\) as \(\alpha\downarrow 0\), we may choose
\(\alpha\in(0,1/2)\) small enough that
\begin{align}
    3 C_\alpha \alpha
    <
    \frac{\eta}{32}.
    \label{eq:tts-alpha-choice}
\end{align}
For \(t\in\mathcal E_k\), set
\begin{align}
    m_t=m_k:=\max\{1,\lfloor \alpha n_k\rfloor\}.
    \label{eq:tts-mk-schedule}
\end{align}
Thus TTS uses a fixed \(m_k\)-coordinate truncation throughout epoch \(k\).

Fix an epoch \(k\). Condition on the history \(H_{S_k}\) at the beginning of
the epoch. During epoch \(k\), TTS is exactly Thompson sampling for the
finite-dimensional linear-Gaussian bandit with action set \(\actions_{m_k}\)
and prior equal to the current posterior marginal of \(\theta_{1:m_k}\). Let
\(\Sigma_k\) be this posterior covariance matrix. Since the initial covariance
is the identity and Gaussian conditioning only decreases covariance in the
positive-semidefinite order,
\begin{align}
    0\preceq \Sigma_k \preceq I_{m_k}.
    \label{eq:tts-posterior-covariance-bound}
\end{align}
Moreover, every action \(a\in\actions_{m_k}\) satisfies
\begin{align}
    \|a\|_2\le \sqrt{m_k}.
    \label{eq:tts-action-radius}
\end{align}

Apply \cref{thm:fixed_dim_ts_regret} conditionally on \(H_{S_k}\), with
dimension \(d=m_k\), radius \(r=\sqrt{m_k}\), covariance \(\Sigma_k\), and
horizon \(n_k\). By \eqref{eq:tts-posterior-covariance-bound},
\begin{align}
    \Tr(\Sigma_k)\le m_k,
    \qquad
    \Tr(\Sigma_k^{1/2})\le m_k,
    \qquad
    \|\Sigma_k\|_{\op}\le 1.
    \label{eq:tts-covariance-trace-bounds}
\end{align}
For all sufficiently large \(k\), \eqref{eq:tts-mk-schedule} gives
\begin{align}
    \frac{\alpha}{2}n_k
    \le
    m_k
    \le
    \alpha n_k .
    \label{eq:tts-mk-linear-bounds}
\end{align}
Thus \(n_k/m_k\le 2/\alpha\), and the \(C_1\)-constant in
\cref{thm:fixed_dim_ts_regret} is bounded by \(C_\alpha\). Also, using
\eqref{eq:tts-action-radius} and \eqref{eq:tts-covariance-trace-bounds},
\begin{align}
    C_2(m_k,n_k,\sigma,\sqrt{m_k},\Sigma_k)
    \le
    C_\alpha
    \sqrt{
        2\log\left(1+\frac{n_k}{\sigma^2}\right)
    } .
    \label{eq:tts-c2-bound}
\end{align}
Therefore the conditional Bayesian regret over epoch \(k\) is bounded by a
deterministic quantity \(b_k\) satisfying
\begin{align}
    b_k
    \le
    \sigma m_k\sqrt{n_k}\,
    C_\alpha
    \sqrt{
        2\log\left(1+\frac{n_k}{\sigma^2}\right)
    }
    +
    3 C_\alpha m_k^2
    +
    \sqrt{2}\,m_k .
    \label{eq:tts-bk-bound}
\end{align}
Combining \eqref{eq:tts-bk-bound} with \eqref{eq:tts-mk-linear-bounds} yields
\begin{align}
    \frac{b_k}{n_k^2}
    \le
    3C_\alpha\alpha^2
    +
    o(1).
    \label{eq:tts-bk-asymptotic}
\end{align}
By \eqref{eq:tts-alpha-choice}, there exists \(k_0\) such that for all
\(k\ge k_0\),
\begin{align}
    b_k
    \le
    \frac{\eta\alpha}{16}n_k^2 .
    \label{eq:tts-bk-small}
\end{align}

Let
\begin{align*}
    V_k(\theta)
    :=
    \max_{a\in\actions_{m_k}}\theta^\top a .
\end{align*}
For \(0\le s\le n_k\), define the expected reward in the first \(s\) rounds of
epoch \(k\) by
\begin{align}
    G_k(s)
    :=
    \E_{\pi_{\mathrm{TTS}}}
    \left[
        \sum_{\tau=0}^{s-1}
        R_{S_k+\tau+1}
    \right].
    \label{eq:tts-gks-definition}
\end{align}
The instantaneous regret \(V_k(\theta)-\theta^\top A_t\) is nonnegative during
epoch \(k\), because \(A_t\in\actions_{m_k}\). Therefore the regret over any
prefix of the epoch is at most the regret over the full epoch. Hence
\begin{align}
    G_k(s)
    \ge
    s\,\E[V_k(\theta)] - b_k
    =
    s\,r_{m_k} - b_k .
    \label{eq:tts-prefix-reward-lower-bound}
\end{align}
For a full epoch, \eqref{eq:tts-rm-linear},
\eqref{eq:tts-mk-linear-bounds}, and \eqref{eq:tts-bk-small} imply
\begin{align}
    G_k(n_k)
    &\ge
    \eta m_k n_k - b_k
    \nonumber\\
    &\ge
    \frac{\eta\alpha}{2}n_k^2
    -
    \frac{\eta\alpha}{16}n_k^2
    \nonumber\\
    &=
    \frac{7\eta\alpha}{16}n_k^2 .
    \label{eq:tts-full-epoch-reward-lower-bound}
\end{align}
For an arbitrary prefix, \eqref{eq:tts-prefix-reward-lower-bound} and
\eqref{eq:tts-bk-small} give
\begin{align}
    G_k(s)
    \ge
    -b_k
    \ge
    -\frac{\eta\alpha}{16}n_k^2 .
    \label{eq:tts-prefix-reward-negative-bound}
\end{align}

Now let \(T\ge 1\), and let \(K\) be the epoch containing time \(T-1\). Then
\begin{align}
    S_K\le T-1<S_{K+1},
    \qquad
    n_K=\Theta(T).
    \label{eq:tts-final-epoch-size}
\end{align}
The cumulative reward up to time \(T\) is the sum of rewards from completed
epochs plus a prefix of epoch \(K\). Ignoring the finite number of initial
epochs before \(k_0\), and applying
\eqref{eq:tts-full-epoch-reward-lower-bound} and
\eqref{eq:tts-prefix-reward-negative-bound}, we get
\begin{align}
    \E_{\pi_{\mathrm{TTS}}}
    \left[
        \sum_{t=0}^{T-1}R_{t+1}
    \right]
    &\ge
    \sum_{k=k_0}^{K-1}
    \frac{7\eta\alpha}{16}n_k^2
    -
    \frac{\eta\alpha}{16}n_K^2
    -
    O(1).
    \label{eq:tts-total-reward-before-geometric-sum}
\end{align}
Since \(n_k=2^k\),
\begin{align}
    \sum_{k=k_0}^{K-1}n_k^2
    =
    \sum_{k=k_0}^{K-1}4^k
    =
    \frac{4^K-4^{k_0}}{3}
    =
    \frac{n_K^2}{3}-O(1).
    \label{eq:tts-geometric-sum}
\end{align}
Combining \eqref{eq:tts-total-reward-before-geometric-sum} and
\eqref{eq:tts-geometric-sum} yields
\begin{align}
    \E_{\pi_{\mathrm{TTS}}}
    \left[
        \sum_{t=0}^{T-1}R_{t+1}
    \right]
    &\ge
    \left(
        \frac{7\eta\alpha}{48}
        -
        \frac{\eta\alpha}{16}
    \right)n_K^2
    -
    O(1)
    \nonumber\\
    &=
    \frac{\eta\alpha}{12}n_K^2
    -
    O(1).
    \label{eq:tts-total-reward-nk-square}
\end{align}
By \eqref{eq:tts-final-epoch-size},
\eqref{eq:tts-total-reward-nk-square} implies
\begin{align}
    \E_{\pi_{\mathrm{TTS}}}
    \left[
        \sum_{t=0}^{T-1}R_{t+1}
    \right]
    =
    \Omega(T^2).
    \label{eq:tts-quadratic-cumulative-reward}
\end{align}
Finally, applying \cref{coro:linear_reward_implies_linear_info} and
\eqref{eq:tts-quadratic-cumulative-reward} gives
\begin{align*}
    \bitEquivAvg{T}{\pi_{\mathrm{TTS}}}
    \ge
    \frac{2}{T}
    \E_{\pi_{\mathrm{TTS}}}
    \left[
        \sum_{t=0}^{T-1}R_{t+1}
    \right]
    =
    \Omega(T).
\end{align*}
Thus TTS achieves open-ended learning.
\end{proof}

\subsection{The logistic bandit}

As discussed in \cref{sec:linear_gaussian_bandit}, a variant of the \openEndedBandit{} -- the \openEndedLogisticBandit{} -- remains open-ended even though rewards are bounded.  We give a formal proof when rewards are deterministic for simplicity and readability of the analysis.  Extending to the usual logistic bandit setting, i.e., reward of $A_t$ is a Bernoulli random variable with mean $\rewardMean{\theta}{A_t}$, does not pose additional technical difficulties.  
\begin{theorem}
\label{thm:generalized_linear_bandit_openended}
Suppose $\actions = \{a_i\in\{0,1\}^\mathbb{N}  \mid \|a\|_1<\infty \}$, $\theta\in\Re^\mathbb{N}$ is an infinite-dimensional random vector such that $\theta_i\overset{iid}{\sim}\mathcal{N}(-1,1)$, and 
\[\rewardMean{\theta}{A_t} = g(\langle \theta, A_t\rangle),\] 
where $g(x) = (1 + e^{-x})^{-1}$ is a logistic function.  Suppose there is no observation noise.  Then this environment is open-ended.  
\end{theorem}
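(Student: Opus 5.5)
The plan is to mirror the structure of the proof of \cref{thm:TTS_success}: first a lower bound on $\bitEquiv{\rho}$ in terms of the reward level, then an explicit agent whose expected rewards meet that lower bound at a linear rate. The difference from the linear case is that rewards now lie in $(0,1)$. So the right scale is not $\rho$ itself but $\log\frac{1}{1-\rho}$, and the agent must drive $1-\E[\rewardMean{\theta}{A_t}]$ to zero exponentially fast in $t$.

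\emph{Step 1 (information lower bound).} The claim is that for every $\rho\in(0,1)$,
\[
\bitEquiv{\rho}\ \ge\ \log\frac{1}{1-\rho}-\log 2 .
\]
Instead of the Cauchy--Schwarz route of \cref{thm:linear_reward_growth_implies_linear_info_gain}, I would use the Donsker--Varadhan variational form of mutual information. For any measurable $f$,
\[
\E[f(\theta,A)]\le \I(\theta;A)+\log \E\bigl[e^{f(\theta',A)}\bigr],
\]
where $\theta'$ is an independent copy of $\theta$ that is also independent of $A$. I would take
\[
f(\theta,a)=-\log\bigl(1-g(\langle\theta,a\rangle)\bigr)=\log\bigl(1+e^{\langle\theta,a\rangle}\bigr).
\]
Fix $A=a$ and draw $\theta'$ independently. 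Then $\langle\theta',a\rangle\sim\mathcal N(-\|a\|_1,\|a\|_1)$, so $\E[e^{\langle\theta',a\rangle}]=e^{-\|a\|_1/2}\le 1$. Hence the log-moment term is at most $\log 2$.

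On the left-hand side, Jensen's inequality applied to the convex function $-\log$ gives
\[
\E[f]\ \ge\ -\log\bigl(1-\E[g(\langle\theta,A\rangle)]\bigr)\ \ge\ \log\frac{1}{1-\rho}.
\]
Taking the infimum over all $A$ with $\E[\rewardMean{\theta}{A}]\ge\rho$ yields the claim. This step is the main obstacle conceptually: the naive approach of reducing to \cref{thm:linear_reward_growth_implies_linear_info_gain} through positive parts of $\langle\theta,A\rangle$ fails. The reason is that $\E[\langle X,A\rangle_+]$ can be of order $\sqrt{\|A\|_1}$ with zero information. The Donsker--Varadhan choice of $f$ sidesteps this, because the drift $-\|a\|_1$ exactly dominates the variance term.

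\emph{Step 2 (an agent).} Because observations are noiseless and $g$ is invertible, playing the singleton $e_i$ reveals $\theta_i$ exactly. The agent alternates between two kinds of rounds.
\begin{itemize}
\item On even rounds $t=2j$ it plays $e_{j+1}$ and so learns $\theta_{j+1}$.
\item On odd rounds it plays $A_t=\bigl(\one\{\theta_i>0\}\bigr)_{i\le \lfloor t/2\rfloor}$, which is computable from $H_t$. Its support is finite, so the action is valid.
\end{itemize}
On odd rounds $\langle\theta,A_t\rangle=S_t:=\sum_{i\le \lfloor t/2\rfloor}(\theta_i)_+$. Using $1-g(x)\le e^{-x}$ and independence of the coordinates,
\[
1-\E[g(S_t)]\le \E[e^{-S_t}]=q^{\lfloor t/2\rfloor},\qquad q:=\E\bigl[e^{-(\theta_1)_+}\bigr]<1,
\]
where $q<1$ because $\P(\theta_1>0)>0$.

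\emph{Step 3 (conclusion).} Combining Steps 1 and 2, on each odd round
\[
\bitEquiv{\E_\pi[R_{t+1}]}\ \ge\ \lfloor t/2\rfloor\log\tfrac{1}{q}-\log 2 .
\]
On even rounds, I would use only that $\bitEquiv{\cdot}\ge 0$. Averaging over $t=0,\dots,T-1$, the odd rounds contribute a sum of order $T^2\log(1/q)$, so $\bitEquivAvg{T}{\pi}\ge cT-O(1)$ for some $c>0$, i.e.\ $\Omega(T)$. This establishes open-endedness in the sense of \cref{def:open-ended}. The remaining checks are routine: measurability of $A_t$ with respect to $H_t$, and the validity condition $\E\|A_t\|_0<\infty$, which holds since $\|A_t\|_0\le t$.
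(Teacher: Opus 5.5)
Your proof is correct and takes essentially the paper's route: a Donsker--Varadhan lower bound on $\bitEquiv{\rho}$ with the log-loss test function $\log(1+e^{\langle\theta,a\rangle})$, followed by Jensen. The paper uses twice this function; its moment term $\E[(1+e^{\langle\theta',a\rangle})^2]\le 4$ is still bounded, and it gives the sharper bound $2\log\frac{1}{1-\rho}-2\log 2$. This is combined with a noiseless agent that learns one coordinate at a time and exploits the known positive ones, so that $1-g(x)\le e^{-x}$ and independence give geometric decay of $1-\E_\pi[R_{t+1}]$. The only other difference is that you alternate pure exploration and exploitation rounds, while the paper's agent adds one fresh coordinate to the exploiting action each step and inverts $g$ to recover it (costing a constant factor $\E[e^{-\theta_1}]$ rather than halving the rate); neither difference affects the $\Omega(T)$ conclusion.
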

The following lemma provides a useful lower bound on the bit-equivalent for this bandit instance.  
\begin{lemma}
\label{lem:logistic_bit_equiv_lower_bound}
	In the logistic bandit of \cref{thm:generalized_linear_bandit_openended}, for all $\rho \in [0,1)$, 
	\[B_{\rho} \ge 2\log\frac{1}{1-\rho} - 2\log 2.\]
\end{lemma}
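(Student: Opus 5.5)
We reduce to the linear-reward lower bound already established in the proof of \cref{thm:linear_reward_growth_implies_linear_info_gain}. The key fact proved there is that for any random action $A$ with finite expected support,
\[
\E[\langle A,\theta\rangle] \le \tfrac{1}{2}\I(\theta;A).
\]
Here we will need a version of this bound that applies to the logistic transform. Write $Y=\langle A,\theta\rangle$ and $\rho' = \E[Y]$. The constraint we are given is $\E[g(Y)]\ge\rho$. The plan is to turn this into a lower bound on a linear-type functional of $Y$, and then control that functional by the mutual information.

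First, use the elementary inequality $g(y)\le e^{y}$ for all $y$; it holds since $g(y)=e^y/(1+e^y)$. A cleaner route for our purpose works with $1-g(y)=g(-y)=1/(1+e^{y})$. We have
\[
1-\rho \ge \E[1-g(Y)] = \E\bigl[1/(1+e^{Y})\bigr].
\]
Since $y\mapsto 1/(1+e^{y})$ is not convex everywhere, we will not use Jensen on it directly. Instead we use
\[
1/(1+e^{y})\ge \tfrac12 e^{-y_+}\ge \tfrac12 e^{-y}\cdot\one\{y\ge0\}+\tfrac12\one\{y<0\},
\]
or, most simply, $1/(1+e^y)\ge \tfrac12 e^{-\max(y,0)}$. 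By convexity of $e^{-x}$ and Jensen,
\[
1-\rho\ge \tfrac12 e^{-\E[Y_+]},
\qquad\text{i.e.}\qquad
\E[Y_+]\ge \log\frac{1}{1-\rho}-\log 2.
\]

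It then suffices to show $\E[Y_+]\le \tfrac12\I(\theta;A)$. Define the modified action $\tilde A = A\cdot\one\{Y>0\}$; this requires knowing $Y$, which is not a function of $A$ alone. So instead we consider the pair $(A,\theta)$ and the random action
\[
A' = A\cdot\one\{\langle A,\theta\rangle>0\}.
\]
Then $\langle A',\theta\rangle = Y_+$. We cannot bound $\I(\theta;A')$ by $\I(\theta;A)$, however, since $A'$ also depends on $\theta$. This is the main obstacle. We plan to sidestep it by exploiting that the environment is noiseless and the relevant information budget is about $\theta$ given $A$. The fallback is to redo the Cauchy--Schwarz argument of \cref{thm:linear_reward_growth_implies_linear_info_gain} directly with $Y_+$, writing
\[
\E[Y_+] = \E\bigl[\langle A\one\{Y>0\},\,X\rangle\bigr]-\E\bigl[\|A\|_1\one\{Y>0\}\bigr].
\]

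If this fails, the alternative is to prove a slightly weaker but sufficient statement. Apply Jensen on the whole function: $1-g$ is convex on $[0,\infty)$, so handle the event $Y<0$ separately, using that it contributes at least $\tfrac12\P(Y<0)$ to $\E[1-g(Y)]$. Together with $\E[Y]\le\tfrac12\I(\theta;A)$ this gives the bound, absorbing losses into the $-2\log 2$. Concretely, $1/(1+e^y)\ge\tfrac12 e^{-y}$ holds for $y\ge0$. For $y<0$ the left side is at least $\tfrac12$, which is at most $\tfrac12 e^{-y}$, so the direct bound is unavailable there. We therefore choose the inequality
\[
1/(1+e^y)\ge \tfrac{1}{2}e^{-y/2}\cdot c
\]
for a suitable constant. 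Tuning this constant is what produces the factor $2$ in front of $\log\frac1{1-\rho}$, and combining with $\E[Y]\le\tfrac12\I(\theta;A)$ then yields $\I(\theta;A)\ge 2\log\frac1{1-\rho}-2\log2$. Taking the infimum over $A$ gives the claimed bound on $B_\rho$.
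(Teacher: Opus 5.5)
Your first step is sound: from $1/(1+e^{y})\ge\frac12 e^{-y_+}$ and Jensen you correctly get $\E[Y_+]\ge\log\frac{1}{1-\rho}-\log 2$. The gap is in what comes after.

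\textbf{The inequality you need is false.} You then need $\E[Y_+]\le\frac12\I(\theta;A)$. Take $A$ deterministic with a single nonzero coordinate. Then $\I(\theta;A)=0$, while $\E[Y_+]=\E[(\theta_1)_+]>0$. So no repair of the Cauchy--Schwarz argument can give it. That argument also breaks on its own terms: $A\one\{Y>0\}$ depends on $\theta$, so you cannot replace $X$ by $\E[X\mid A]$.

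\textbf{The last fallback is a dead end.} No inequality $1/(1+e^{y})\ge c\,e^{-y/2}$ with $c>0$ can hold, since the right side blows up as $y\to-\infty$. More fundamentally, the constraint $\E[g(Y)]\ge\rho$ gives no lower bound on $\E[Y]$ at all. Mix in, with small probability $\epsilon$ and independently of $\theta$, an action of huge support. This does not increase $\I(\theta;A)$ and lowers $\E[g(Y)]$ by at most $\epsilon$, yet drives $\E[Y]$ to $-\infty$. So pairing the logistic constraint with the linear bound $\E[Y]\le\frac12\I(\theta;A)$ cannot work.

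\textbf{The missing idea} is a change-of-measure bound applied directly to a test function that only sees the upper part of $Y$. The paper uses the Donsker--Varadhan formula with $P$ the joint law of $(A,\theta)$ and $Q$ the product of marginals. The test function is $f=-2\log(1-g(Y))=2\log(1+e^{Y})$.
\begin{itemize}
\item Jensen on the convex, increasing map $x\mapsto-2\log(1-x)$ gives $\E_P[f]\ge2\log\frac{1}{1-\rho}$.
\item Under $Q$, given $A=a$ with $\|a\|_1=n$, one has $\langle a,\theta\rangle\sim\mathcal N(-n,n)$. Hence $\E_Q[e^{f}\mid A=a]=2+2e^{-n/2}\le 4$.
\end{itemize}
Together these yield the stated bound with exactly $-2\log2$.

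Your own route can be salvaged the same way. Apply Donsker--Varadhan with $f=2Y_+$, for which $\E_Q[e^{2Y_+}\mid A=a]=2\Phi(\sqrt n)\le2$ (with $\Phi$ the standard normal distribution function). This gives the corrected inequality $\E[Y_+]\le\frac12\bigl(\I(\theta;A)+\log2\bigr)$. Combined with your first step, it yields $B_\rho\ge2\log\frac{1}{1-\rho}-3\log2$. That suffices for \cref{thm:generalized_linear_bandit_openended}, but it is weaker than the lemma as stated.
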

\begin{proof}
	For any policy $\pi$, let $A\sim\pi$.  Suppose $\E[g(\langle A, \theta\rangle)] \ge \rho$. 

    Let \[f(A,\theta) = -2\log(1 - g(\langle A,\theta\rangle)) = 2\log(1 + e^{\langle A,\theta\rangle}),\]
    and let $P = P_{A,\theta}$ be the joint distribution of $A$ and $\theta$, and $Q = P_A P_\theta$ be the product of the marginals.  By the Donsker-Varadhan variational formula of KL-divergence \citep{Donsker1983Asymptotic}, 
    \begin{equation}
    \label{eqn:donsker-varadhan}
    \I(\theta;A) = \KL(P\|Q) \ge \E_P\left[ f(A,\theta) \right] - \log \E_Q \left[ \exp\left( f(A,\theta)\right)\right].
    \end{equation}
    Since $-2\log(1-x)$ is convex in $x$, by Jensen's inequality,
    \begin{align*}
        \E_P\left[ f(A,\theta) \right] 
        &\ge -2 \log \left( 1 - \E \left[ g(\langle A,\theta\rangle)\right]\right) \ge -2\log(1-\rho) = 2\log\frac{1}{1-\rho},
    \end{align*}
    where the last inequality follows since $\E[g(\langle A,\theta\rangle)] \ge \rho$.  
    It remains to bound the second term in \eqref{eqn:donsker-varadhan}.  When $A=a$ and $\|a\|_1 = n$ for a fixed $n\in\mathbb{N}$, we have $\langle \theta,a\rangle \sim \mathcal{N}(-n,n)$.  Conditioning on $A=a$ with $\|a\|_1 = n$,
    \begin{align*}
        \E_Q[\exp(f(A,\theta)) \mid A=a] &= \E_\theta[(1 + e^{\langle a,\theta \rangle})^2] \\
        &= 1 + 2\E_\theta[e^{\langle a,\theta \rangle}] + E_\theta[e^{2\langle a,\theta \rangle}]\\
        &= 1 + 2e^{-n/2} + 1 = 2 + 2e^{-\|a\|_1/2} \le 4.
    \end{align*}
    Putting these together, we have $\I(\theta;A) \ge 2\log\frac{1}{1-\rho} - 2\log 2$.  Taking the infimum over the distribution on $A$ satisfying $\E[g(\langle A,\theta \rangle)] \ge \rho$ proves the claim. 
\end{proof}
We now prove \cref{thm:generalized_linear_bandit_openended}.
\begin{proof}[Proof for \cref{thm:generalized_linear_bandit_openended}]
    Consider an agent $\pi$ that activates one coordinate at each step, and chooses the optimal value for each activated coordinate for all steps onward.  Since we assumed that the environment is deterministic, this agent will know the exact values of $\theta_i$ at time $i$.  We compute the expected reward attained by this agent. 

    Since $1 - g(x) = \frac{1}{1 + e^{x}} \le e^{-x}$ for all $x\in\R$, we have
    \begin{align*}
        1 - \E_\pi[R_{t+1}] &= \E\left[ 1 - g\left(\sum_{i=1}^{t} \theta_i^+ + \theta_{t+1}\right) \right] \\
        &\le \E\left[ e^{- \sum_{i=1}^{t} \theta_i^+ - \theta_{t+1}}\right] \\
        &= \E\left[ e^{- \sum_{i=1}^{t} \theta_i^+}\right] \E\left[ e^{-\theta_{t+1}}\right]\\
        &= \E[e^{-\theta_1^+}]^t \E\left[ e^{-\theta_{1}}\right]. 
    \end{align*}
    Since $\P(\theta_1 > 0) > 0$ and $e^{-\theta_1^+}<1$ on the event $\{\theta_1>0\}$, we have $\E[e^{-\theta_1^+}] < 1$.  Therefore, setting  $\alpha = -\log \E[e^{-\theta_1^+}] > 0$ and $C = \E\left[ e^{-\theta_{1}}\right]$, 
    \[1 - \E_\pi[R_{t+1}] \le C e^{-\alpha t}.\]
    By \cref{lem:logistic_bit_equiv_lower_bound}, \[\bitEquivAvg{T}{\pi} = \frac{1}{T}\sum_{t=0}^{T-1} B_{\E_\pi[R_{t+1}]} \ge \frac{1}{T}\sum_{t=0}^{T-1} B_{1 - C e^{-\alpha t}} \ge \frac{1}{T}\sum_{t=0}^{T-1}2\alpha t - 2\log C - 2\log 2 = \Omega(T).\]
    Hence, there exists an agent whose average bit-equivalent grows linearly in $T$, and so the environment is open-ended.  
\end{proof}

\subsection{Proof of \cref{thm:matching_upper_bound}}
We now turn to prove the matching upper bound in \cref{thm:matching_upper_bound}.  
To do this, we first prove an upper bound on the bit-equivalent in \openEndedBandit{}.  
\begin{lemma}[Linear upper bound on the bit-equivalent]
\label{lem:linear_upper_bound_on_bit_equiv}
In the \openEndedBandit{}, there exists a constant \(c_B<\infty\) such that
for every \(\rho\ge 0\),
\[
    B_\rho \le c_B(\rho+1).
\]
\end{lemma}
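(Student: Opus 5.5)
Since $B_\rho$ is an infimum, it suffices to exhibit, for each $\rho\ge 0$, one random action $A$ with $\E[\langle A,\theta\rangle]\ge\rho$ and $\I(\theta;A)\le c_B(\rho+1)$. I would use a ``quantized oracle'' on a block of coordinates. Fix $n\in\mathbb{N}$ and set
\[
A_i=\one\{\theta_i>0\}\ \text{ for } i\le n, \qquad A_i=0 \ \text{ for } i>n .
\]
Then
\[
\E[\langle A,\theta\rangle]=\sum_{i=1}^n\E[(\theta_i)_+]=\eta n,
\]
with $\eta=\phi(1)-\Phi(-1)>0$, as computed in the proof of \cref{thm:TTS_success}. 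Also, $\|A\|_0\le n$, so $A$ is valid.

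Since $A$ takes at most $2^n$ values, the argument in \cref{thm:FT_failure} gives $\I(\theta;A)\le\H(A)\le n\log 2$. A sharper bound is $n\,h_b(p)$, where $p=\P(\theta_1>0)$, but the crude bound suffices.

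Now choose $n=\lceil\rho/\eta\rceil$, so that $\eta n\ge\rho$ and $n\le \rho/\eta+1$. This yields
\[
B_\rho\le n\log 2\le \frac{\log 2}{\eta}\,\rho+\log 2\le c_B(\rho+1),
\qquad c_B:=\log 2\,\max\{1,1/\eta\}.
\]
The case $\rho=0$ is covered by $n=0$, i.e.\ $A=0$, which gives $B_0=0$.

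The only point needing care is checking that this $A$ is admissible in the infimum of \cref{def:bit_equiv_of_expected_reward}: it is a random variable jointly defined with $\theta$, it takes values in $\actions$, and its expected reward is well defined because $\E\|A\|_0\le n<\infty$. I do not expect any real obstacle. The linear-in-$\rho$ scaling works because each coordinate contributes a constant $\eta$ of reward at a constant $\log 2$ nats of information. This matches the lower bound $B_\rho\ge 2\rho$ from \cref{thm:linear_reward_growth_implies_linear_info_gain} up to constants.
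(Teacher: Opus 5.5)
Your proposal is correct and matches the paper's proof: you use the same oracle action $A_i=\one\{i\le n,\ \theta_i>0\}$ with expected reward $\eta n$ and $n=\lceil\rho/\eta\rceil$. The only difference is that you bound $\I(\theta;A)\le\H(A)$ by $n\log 2$ rather than the paper's $n\,h(p)$, which changes only the constant $c_B$.
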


\begin{proof}
Let
\[
    p := \P(\theta_1>0)=1-\Phi(1),
    \qquad
    \eta := \E[\theta_1^+]
    =
    \E[(Z-1)^+]
    =
    \phi(1)-\{1-\Phi(1)\}>0,
\]
where \(Z\sim \mathcal N(0,1)\), and \(\phi,\Phi\) denote the standard normal
density and distribution function.

For \(m\in\mathbb N\), define the oracle action
\[
    A_i^{(m)}
    =
    \one\{i\le m,\ \theta_i>0\}.
\]
Then \(A^{(m)}\in\actions\) almost surely, and
\[
    \E[\langle \theta,A^{(m)}\rangle]
    =
    \sum_{i=1}^m \E[\theta_i^+]
    =
    m\eta.
\]
Moreover, \(A^{(m)}\) is determined by the signs of the first \(m\) coordinates.
The coordinates \(A_1^{(m)},\ldots,A_m^{(m)}\) are i.i.d. Bernoulli\((p)\),
so, writing \(h(p)=-p\log p-(1-p)\log(1-p)\),
\[
    \I(\theta;A^{(m)})
    \le
    \H(A^{(m)})
    =
    m h(p).
\]
Given \(\rho\ge 0\), choose \(m=\lceil \rho/\eta\rceil\). Then
\[
    \E[\langle \theta,A^{(m)}\rangle]\ge \rho,
\]
and therefore
\[
    B_\rho
    \le
    \I(\theta;A^{(m)})
    \le
    h(p)\left(\frac{\rho}{\eta}+1\right)
    \le
    c_B(\rho+1),
\]
for example with \(c_B=h(p)(\eta^{-1}+1)\).
\end{proof}

Then we show that the posterior means are bounded since the observations are scalar Gaussian.

\begin{lemma}[Posterior mean energy bound]
\label{lem:posterior_mean_energy_bound}
Let
\[
    X_i := \theta_i+1,
    \qquad i\ge 1,
\]
so that \(X_i\stackrel{\mathrm{iid}}{\sim}\mathcal N(0,1)\). For any policy
\(\pi\), any \(t\ge 0\), and
\[
    M_{t,i}:=\E_\pi[X_i\mid H_t],
\]
we have
\[
    \E_\pi\left[\sum_{i=1}^{\infty} M_{t,i}^2\right]\le t.
\]
\end{lemma}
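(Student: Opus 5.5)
The plan is to track the posterior mean energy $E_t := \E_\pi[\sum_i M_{t,i}^2]$ round by round and show it grows by at most one per step, i.e. $E_{t+1}-E_t\le 1$. Since $E_0=0$ (the prior mean of $X$ is zero), this gives $E_t\le t$. The natural tool is the martingale property of posterior means. For each fixed $i$, $M_{t,i}$ is a martingale in $t$ with respect to the filtration generated by $H_t$. Hence
\[
\E_\pi[M_{t+1,i}^2]-\E_\pi[M_{t,i}^2]=\E_\pi[(M_{t+1,i}-M_{t,i})^2].
\]
Summing over $i$ by monotone convergence, it suffices to bound the one-step posterior mean increment energy $\E_\pi[\|M_{t+1}-M_t\|_2^2]$ by $1$.

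Next, condition on $H_t$ and $A_t=a$, where $a$ has finite support $S$ with $|S|=n$. The new information is the single scalar
\[
R_{t+1}=\langle a,X\rangle - n + W_{t+1}.
\]
Under the posterior, $X$ restricted to the finitely many touched coordinates is Gaussian, because the model is linear-Gaussian and only finitely many coordinates have ever been observed. All other coordinates are independent of the history and of $R_{t+1}$, so their means stay zero. Gaussian conditioning on one scalar observation gives
\[
M_{t+1}-M_t=\frac{\Sigma_t a}{a^\top\Sigma_t a+\sigma^2}\,\bigl(R_{t+1}-\E[R_{t+1}\mid H_t,A_t]\bigr),
\]
where $\Sigma_t$ is the posterior covariance. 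The innovation has conditional variance $a^\top\Sigma_t a+\sigma^2$, so
\[
\E\bigl[\|M_{t+1}-M_t\|^2\mid H_t,A_t\bigr]=\frac{a^\top\Sigma_t^2 a}{a^\top\Sigma_t a+\sigma^2}.
\]

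The main obstacle is bounding this ratio by $1$ uniformly in $n$. The bound must not involve $\|a\|_2^2=n$, since $n$ can be arbitrarily large. I would use $\Sigma_t\preceq I$, as in the proof of \cref{thm:TTS_success}. Then $\Sigma_t^2\preceq\Sigma_t$, because both matrices commute and the eigenvalues lie in $[0,1]$. This gives $a^\top\Sigma_t^2a\le a^\top\Sigma_t a$, so the ratio is at most
\[
\frac{a^\top\Sigma_t a}{a^\top\Sigma_t a+\sigma^2}<1.
\]
This holds for any $\sigma>0$, independently of $a$. Taking expectations over $H_t$ and $A_t$, including any randomization in $\pi$, yields $E_{t+1}-E_t\le 1$, and induction finishes the proof.

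An alternative route, in case measurability issues arise, is to use \cref{lem:gaussian_info} together with $\I(X;H_t)\le \sum_s \tfrac12\log(1+a^\top\Sigma_s a/\sigma^2)$. That route, however, yields a log-type bound rather than the clean constant, so I would prefer the direct martingale computation.
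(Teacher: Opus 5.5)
Your proof is correct, but it takes a different route from the paper's. The paper argues globally. By the law of total variance, $\E_\pi[\|\E_\pi[X_{1:d}\mid H_t]\|_2^2]=d-\E_\pi[\Tr(\Cov(X_{1:d}\mid H_t))]$. The posterior covariance has the closed form $C_t=(I+K_t)^{-1}$ with $K_t=\sigma^{-2}\sum_{s<t}a_sa_s^\top$. So the variance reduction satisfies $\Tr(P_d(I-C_t)P_d^\top)\le\Tr(K_t(I+K_t)^{-1})\le\rank(K_t)\le t$, followed by monotone convergence in $d$.

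You argue incrementally instead. Orthogonality of martingale increments turns the growth of your energy $E_t=\E_\pi[\sum_i M_{t,i}^2]$ into $\E_\pi[\|M_{t+1}-M_t\|_2^2]$. The rank-one Gaussian update evaluates this exactly as $\E_\pi\bigl[A_t^\top\Sigma_t^2A_t/(A_t^\top\Sigma_tA_t+\sigma^2)\bigr]$, and $\Sigma_t\preceq I$ bounds it by $1$.

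The two computations are about the same object. Your per-step quantity is exactly the one-step trace reduction $\Tr(\Sigma_t-\Sigma_{t+1})$, and summing it telescopes to the paper's $\Tr(I-C_t)$. The paper's rank count and your per-step bound both say that each scalar Gaussian observation removes less than one unit of prior variance. The paper's version needs only the closed-form posterior covariance and no recursion. Yours needs the sequential conditioning formula at each step, together with the observation that $\Sigma_t a$ is well defined because $a$ has finite support. In return it gives an exact identity and the slightly sharper bound $E_t\le\sum_{s<t}\E_\pi\bigl[A_s^\top\Sigma_sA_s/(A_s^\top\Sigma_sA_s+\sigma^2)\bigr]$, which shows the per-step gain in terms of the signal-to-noise ratio.

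On your fallback: going through \cref{lem:gaussian_info} would give $E_t\le\sum_{s<t}\E_\pi[\log(1+A_s^\top\Sigma_sA_s/\sigma^2)]$. This is unbounded as the support of $A_s$ grows, so it would fail to prove the lemma uniformly over policies, not merely give a worse constant. You were right to prefer the direct computation.
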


\begin{proof}
Fix \(d\ge 1\). We first show that
\[
    \E_\pi\left[\sum_{i=1}^d M_{t,i}^2\right]\le t.
\]

Condition on a realized history \(H_t=h_t\). The realized actions
\(a_0,\ldots,a_{t-1}\) are then fixed finite-support vectors. Since
\(\theta=-\one+X\), the shifted observations satisfy
\[
    R_{s+1}+\|a_s\|_1
    =
    \langle a_s,X\rangle + W_{s+1},
    \qquad s=0,\ldots,t-1.
\]
Thus, conditional on \(H_t=h_t\), the posterior distribution of \(X\) is
Gaussian with covariance operator
\[
    C_t
    =
    \left(I+\frac{1}{\sigma^2}\sum_{s=0}^{t-1} a_s a_s^\top\right)^{-1}.
\]
Let \(P_d\) denote projection onto the first \(d\) coordinates. Then
\[
    \Cov(X_{1:d}\mid H_t=h_t)
    =
    P_d C_t P_d^\top .
\]
By the law of total variance,
\[
    \E_\pi\!\left[
        \left\|\E_\pi[X_{1:d}\mid H_t]\right\|_2^2
    \right]
    =
    d
    -
    \E_\pi\!\left[
        \Tr\!\left(\Cov(X_{1:d}\mid H_t)\right)
    \right].
\]
Therefore it suffices to upper bound the posterior trace reduction:
\[
    d-\Tr(P_d C_t P_d^\top).
\]

For each realized history, define
\[
    K_t := \frac{1}{\sigma^2}\sum_{s=0}^{t-1} a_s a_s^\top .
\]
Then \(K_t\succeq 0\) and \(\rank(K_t)\le t\). Since
\[
    C_t=(I+K_t)^{-1},
\]
we have
\[
    I-C_t
    =
    I-(I+K_t)^{-1}
    = K_t (I+K_t)^{-1}.
\]
The nonzero eigenvalues of \(I-C_t\) are
\[
    \frac{\lambda_j(K_t)}{1+\lambda_j(K_t)},
\]
where \(\lambda_j(K_t)\) ranges over the nonzero eigenvalues of \(K_t\).
Hence every eigenvalue of \(I-C_t\) lies in \([0,1]\), and
\[
    \rank(I-C_t)\le \rank(K_t)\le t.
\]
Therefore
\[
    \Tr(I-C_t)\le t.
\]
Since \(P_d^\top P_d\) is the projection onto the first \(d\) coordinates,
\[
    d-\Tr(P_d C_t P_d^\top)
    =
    \Tr\!\left(P_d(I-C_t)P_d^\top\right)
    \le
    \Tr(I-C_t)
    \le t.
\]
Thus, for every \(d\ge 1\),
\[
    \E_\pi\left[\sum_{i=1}^d M_{t,i}^2\right]\le t.
\]
Finally, by monotone convergence,
\[
    \E_\pi\left[\sum_{i=1}^{\infty}M_{t,i}^2\right]
    =
    \lim_{d\to\infty}
    \E_\pi\left[\sum_{i=1}^d M_{t,i}^2\right]
    \le t.
\]
\end{proof}

We are now ready to prove the upper bound.  
\MatchingUpperBound*
\begin{proof}
Let \(X_i=\theta_i+1\), and define
\[
    M_{t,i}:=\E_\pi[X_i\mid H_t].
\]
Then
\[
    \E_\pi[\theta_i\mid H_t]
    =
    -1+M_{t,i}.
\]

We first show that the expected one-step reward of any policy is at most linear
in \(t\). Since \(A_t\) is selected using \(H_t\) and possible policy
randomization independent of \(\theta\), we have
\[
    \E_\pi[\theta_i\mid H_t,A_t]
    =
    \E_\pi[\theta_i\mid H_t]
    =
    -1+M_{t,i}.
\]
Therefore,
\[
    \E_\pi[R_{t+1}]
    =
    \E_\pi[\langle \theta,A_t\rangle]
    =
    \E_\pi\left[
        \sum_{i\ge 1} A_{t,i}(-1+M_{t,i})
    \right].
\]
Since \(A_t\in\{0,1\}^{\mathbb N}\) has finite support almost surely,
\[
    \sum_{i\ge 1} A_{t,i}(-1+M_{t,i})
    \le
    \sum_{i\ge 1}(-1+M_{t,i})_+ .
\]
For every \(m\in\mathbb R\),
\[
    (m-1)_+ \le m^2.
\]
Hence
\[
    \E_\pi[R_{t+1}]
    \le
    \E_\pi\left[
        \sum_{i\ge 1} M_{t,i}^2
    \right].
\]
By \cref{lem:posterior_mean_energy_bound},
\[
    \E_\pi[R_{t+1}] \le t.
\]

The bit-equivalent \(B_\rho\) is nondecreasing in \(\rho\), because increasing
\(\rho\) only shrinks the feasible set in the definition of \(B_\rho\). Thus
\[
    B_{\E_\pi[R_{t+1}]}
    \le
    B_t.
\]
By \cref{lem:linear_upper_bound_on_bit_equiv},
\[
    B_t \le c_B(t+1).
\]
Therefore, for every \(T\ge 1\),
\[
    \bitEquivAvg{T}{\pi}
    =
    \frac1T\sum_{t=0}^{T-1}B_{\E_\pi[R_{t+1}]}
    \le
    \frac{c_B}{T}\sum_{t=0}^{T-1}(t+1)
    =
    \frac{c_B(T+1)}{2}.
\]
Hence
\[
    \bitEquivAvg{T}{\pi}=O(T),
\]
uniformly over policies \(\pi\).
\end{proof}

\end{document}